\documentclass[letterpaper]{article} 
\usepackage[preprint]{aaai2027}  
\usepackage[hyphens]{url}  
\usepackage{graphicx} 
\usepackage{natbib}  
\usepackage{caption} 
\usepackage{algorithm}
\usepackage{algorithmic}
\usepackage{amsthm}
\usepackage{amsmath}
\usepackage{amssymb}
\usepackage{enumitem}
\usepackage{atnguyen_new}
\usepackage{soul}
\usepackage{mathtools}
\usepackage{tikz}

\usepackage{newfloat}
\usepackage{listings}
\DeclareCaptionStyle{ruled}{labelfont=normalfont,labelsep=colon,strut=off} 
\floatstyle{ruled}
\newfloat{listing}{tb}{lst}{}
\floatname{listing}{Listing}

\usepackage{booktabs}

\title{Tight Bounds for Data-driven Multiple Hyper-parameter Tuning \\ with Structured Loss Function}
\author {
    Anh Tuan Nguyen\textsuperscript{\rm 1},
    Viet Anh Nguyen\textsuperscript{\rm 2}
}
\affiliations {
    \textsuperscript{\rm 1}Carnegie Mellon University\\
    \textsuperscript{\rm 2}The Chinese University of Hong Kong\\
    atnguyen@cs.cmu.edu, nguyen@se.cuhk.edu.hk
}

\begin{document}

\maketitle

\begin{abstract}
    Data-driven algorithm design frames hyperparameter tuning as a statistical learning problem, but establishing generalization guarantees remains challenging due to the implicit, non-smooth dependence of model performance on hyperparameters. Existing multi-dimensional bounds under piecewise-polynomial assumptions remain theoretically loose and lack comprehensive lower bounds. We resolve this by establishing tight pseudo-dimension bounds for multi-dimensional data-driven tuning. First, we refine the learning-theoretic upper bound using real algebraic geometry; by analyzing invariant connected sign cells during block elimination rather than isolated sign vectors, we avoid topological over-counting to derive strictly sharper sample complexities. Second, we present a multi-regime lower-bound framework that disentangles combinatorial and algebraic capacities. By constructing shattered problem instances across distinct regimes, we prove our upper bounds are tightly saturated. Finally, we extend our topological framework to accommodate general bi-level validation-loss tuning and broader semi-algebraic applications.
\end{abstract}


\section{Introduction}
The success of modern machine learning relies heavily on the meticulous selection of hyperparameters~\cite{probst2018tunability, a2024hyperparameter}. Despite its central role in model deployment, hyperparameter tuning is predominantly treated as an empirical art rather than a rigorous analytical discipline. Traditional approaches, such as exhaustive grid search or random search, discretize continuous parameter spaces to identify high-performing configurations through empirical validation. While straightforward in practice, these brute-force methods are theoretically unprincipled and fail to provide formal generalization guarantees across continuous hyperparameter spaces.

To automate this process, practitioners often rely on sophisticated heuristic search strategies, including Bayesian optimization \cite{bergstra2011algorithms} and spectral methods \cite{hazan2018hyperparameter}. However, these techniques typically hinge on restrictive structural assumptions, such as modeling the loss landscape as a smooth Gaussian process, that often fail to capture the highly volatile, non-convex nature of modern objective functions. Furthermore, resource-allocation strategies like Hyperband \cite{li2017hyperband} excel at computational efficiency via early stopping, yet they primarily focus on discrete search spaces for fixed problem instances. Consequently, these frameworks lack the theoretical machinery to characterize the fundamental statistical complexity of identifying optimal continuous hyperparameters.

To bridge this theoretical gap, the \textit{data-driven algorithm design} paradigm \cite{gupta2020data, balcan2020data} frames hyperparameter tuning as a formal statistical learning problem over an unknown, application-specific problem distribution $\cD$ over the problem space $\cX$. Given a finite sample of training instances $S \sim \cD^N$, the goal is to identify a hyperparameter configuration that provably generalizes to unseen problem instances $x \sim \cD$ from the exact same distribution. This process is inherently bi-level: the induced loss $\ell_\alpha(x) = \inf_{\theta \in \mathcal{S}(x, \alpha)} g(x, \alpha, \theta)$ is evaluated on a target validation objective $g$, subject to the model parameters $\theta$ being optimized over a surrogate training objective $f$, that is, $\mathcal{S}(x, \alpha) = \arg\min_{\theta \in \Theta} f(x, \alpha, \theta)$. For example, in ridge regression, a problem instance $x = (A, b, A', b') \in \mathcal{X}$ consists of a training and a validation set. The hyperparameter $\alpha$ explicitly regularizes the training objective $f(x, \alpha, \theta) = \Vert{}A\theta - b\Vert{}_2^2 + \alpha\Vert{}\theta\Vert{}_2^2$, but impacts the validation target $g(x, \alpha, \theta) = \Vert{}A'\theta - b'\Vert{}_2^2$ only implicitly via the optimal weights $\theta \in \mathcal{S}(x, \alpha)$. Bounding the statistical complexity of this non-smooth dependency, where $\alpha$ acts exclusively through the argmin of an auxiliary problem, is our fundamental theoretical hurdle.

To overcome this challenge, we leverage statistical learning theory to bound the generalization capacity, measured via the pseudo-dimension, of the hyperparameter-induced loss function class $\mathcal{L} = \{\ell_\alpha : \mathcal{X} \to [-H, H] \mid \alpha \in \mathcal{A}\}$. Our analysis relies on the structural observation that for many practical machine learning problems, the training objective $f_x(\alpha, \theta) \triangleq f(x, \alpha, \theta)$ admits a \textit{piecewise polynomial structure} with respect to both the hyperparameter $\alpha$ and the model parameter $\theta$; see e.g., Definition~\ref{def:piecewise-poly-def} for the formal definition and Figure~\ref{fig:piecewise-poly-parabola} for a simple example. This piecewise polynomial assumption is \textit{highly ubiquitous}: it has been rigorously established across a wide spectrum of applications in classical learning theory~\cite{bartlett1998almost, montufar2014number, bartlett2019nearly} and in recent data-driven algorithm design frameworks~\cite{balcan2021much, balcan2023new, nguyen2026provably}.

Despite the prevalence of this structure, existing generalization guarantees remain fundamentally limited. \citet{balcan2025sample} provided the first formal framework for this setting, but their ad-hoc, low-dimensional geometric analysis is highly restrictive: it applies exclusively to \textit{one-dimensional hyperparameters} ($\alpha \in \mathbb{R}$) and \textit{single-level objectives} ($f \equiv g$). \citet{le2026provably} successfully extended this analysis to multi-dimensional hyperparameters ($\alpha \in \mathbb{R}^p$) and general bi-level objectives ($f \not\equiv g$) using a model-theoretic approach. However, their statistical bounds are demonstrably suboptimal. Relying heavily on Quantifier Elimination (QE) \cite{basu2006algorithms} followed by the Goldberg-Jerrum (GJ) framework \cite{bartlett2022generalization}, their approach generates overly conservative upper bounds due to topological under-counting and inflated algebraic dependencies. Furthermore, their approach fails to establish a lower bound that captures the true combinatorial capacity of the problem, specifically one that scales with the piecewise structural complexities, such as the number of pieces and the number of boundaries. These unresolved theoretical gaps directly motivate the tight bounding techniques developed in this work.

\noindent\textbf{Contributions.} In this work, we resolve the theoretical limitations of prior data-driven hyperparameter tuning frameworks by establishing improved pseudo-dimension bounds for the multi-dimensional setting. Our core contributions are:

\begin{itemize}[leftmargin=*]
    \item We establish in Theorem~\ref{thm:general-pdim-upper-bound-via-block-eliminiation} that if a loss function is definable in polynomial \textit{first-order logic} (FOL), its pseudo-dimension is tightly bounded by the topological complexity of a nested block elimination process. By analyzing invariant connected sign cells, this approach bypasses the inflated algebraic dependencies of prior quantifier elimination methods \citep{le2026provably}.
    
    \item We apply our nested block elimination framework to the standard training-loss setting ($f \equiv g$), where the underlying objective $f(x, \alpha, \theta)$ exhibits a piecewise polynomial structure (Theorem 5.1). By representing the induced loss $\ell_\alpha(x) = \min_{\theta \in \Theta} f(x, \alpha, \theta)$ as a polynomial FOL, we derive strictly sharper pseudo-dimension bounds that directly resolve the suboptimal multi-dimensional guarantees of prior work \citep{le2026provably}.

    \item We complement our upper bounds with a novel multi-regime lower-bound framework in Lemmas~\ref{lm:lower-bound-tf} and~\ref{lm:lower-bound-mf}. By independently analyzing the combinatorial ($T_f, M_f$) and algebraic ($\Delta_f$) capacities, we prove that our upper bounds are tight with respect to $T_f$ and $\Delta_f$, and nearly tight with respect to $M_f$. This establishes the fundamental necessity of each structural parameter in our theoretical guarantees.

    \item We extend our framework to the general bi-level validation-loss setting and broader semi-algebraic applications (Theorems~\ref{thm:validation-loss-upper-bound} and \ref{thm:data-driven-weighted-group-lasso}). By applying nested block elimination to validation-loss tuning, we eliminate the inflated algebraic dependencies of prior work. Furthermore, we demonstrate the framework's versatility by analyzing the Weighted Group Lasso, accommodating structures beyond piecewise polynomials and strictly improving existing sample complexity bounds by a full factor of $p$.
\end{itemize}

\noindent \textbf{Technical difference and overviews.} Bounding the generalization capacity of multi-dimensional, bi-level hyperparameter tuning requires capturing the implicit, non-smooth dependency of the validation loss on the hyperparameters. Prior model-theoretic work approached this by applying standard Quantifier Elimination (QE) followed by the Goldberg-Jerrum framework~\citep{le2026provably}. However, this strategy yields highly suboptimal statistical bounds due to severe topological overcounting.  To resolve this, we introduce a novel geometric framework based on nested block elimination. Rather than evaluating isolated sign conditions, we recursively construct a nested block-sign profile that tracks the logical invariance of polynomial formulas across entire connected sign cells. This topological shift completely bypasses the algebraic inflation inherent to standard QE, enabling us to derive strictly sharper, optimal sample-complexity guarantees for the multi-dimensional tuning problem. See Appendix~\ref{apx:detailed-technique-comparision} for a detailed discussion.
 
\noindent \textbf{Notations.} For a real-valued $t \in \bbR$, we denote $\sign(t) = 0$ if $t = 0$, $\sign(t) = 1$ if $t > 0$, and $\sign(t) = -1$ otherwise. For a \textit{variable} $z \in \bbR^k$, we denote $\bbR[z]$ the \textit{polynomial ring} of $z$, containing all (multivariate) polynomials of $z$; e.g., $P(z) = z_1^2 + z_2^2 \in \bbR[z]$ for $z \in \bbR^2$. Given a polynomial $P(z)$, we denote $\deg(P)$ the \textit{degree} of $P(z)$; e.g., $\deg(P) = 2$ for $P(z) = z_1^2 + z_2^2$. For a function $h(z, x)$ that takes both variables $z$ and $x$ as inputs, we denote $h_x(z) \triangleq h(z, x)$ the \textit{induced function} when we treat $x$ as fixed and vary $z$. For a logic sentence $A$, we denote $\bbI(A) = 1$ if $A$ is true, else 0; e.g., $\bbI(1 > 0) = 1$, and $\bbI(0 > 1) = 0$.

\subsection{Related Works}
\paragraph{Data-driven algorithm design.} Data-driven algorithm design \cite{gupta2020data, balcan2020data} adapts hyperparameters to specific problem distributions rather than relying on worst-case instances. This paradigm has achieved significant empirical success across diverse domains, including sketching \cite{li2023learning, indyk2019learning}, linear and mixed-integer programming \citep{sakaue2024generalization, nguyen2026provably, balcan2022structural, cheng2026generalization, le2026provablymilp}, and regularization tuning \cite{balcan2022provably, balcan2023new}.

\paragraph{Theoretical guarantees for data-driven algorithm design.} Motivated by empirical successes, recent work has sought to establish statistical guarantees for data-driven algorithm design \citep{balcan2021much, bartlett2022generalization, balcan2025algorithm}. However, the non-smooth, potentially discontinuous hyperparameter landscape $\ell_x(\alpha)$ makes this challenging, leading most analyses to avoid inner optimization over model parameters $\theta$. While \citet{balcan2025sample} tackled this harder bi-level case, their framework is restricted to one-dimensional hyperparameters ($\alpha \in \mathbb{R}$) and single-level objectives ($f \equiv g$). \citet{le2026provably} recently extended this to multi-dimensional ($\alpha \in \mathbb{R}^p$) and general bi-level settings ($f \not\equiv g$), but their complexity bounds remain theoretically loose and lack lower bounds; these are the gaps we resolve in this work.

\section{Preliminaries}
\subsection{Backgrounds on Learning Theory}
We first recall some standard results in learning theory, which play the central role in this work. 

\begin{definition}[\textbf{Pseudo-dimension} \cite{pollard1984convergence}] \label{def:pseudo-dimension}
    Consider a real-valued function class $\cL = \{\ell_{\alpha}: \cX \rightarrow \bbR \mid \alpha \in \cA\}$ parameterized by $\alpha \in \cA$. Given a set of inputs $S = (x_1, \dots, x_N) \subset \cX$, we say that $S$ is shattered by $\cL$ if there exists a set of real-valued threshold $\tau_1, \dots, \tau_N \in \bbR$ such that $\abs{\{(\bbI(\ell_{\alpha}(x_1)  \geq \tau_1), \dots, \bbI(\ell_{\alpha}(x_N)  \geq \tau_N)) \mid \ell_{\alpha} \in \cL\}}= 2^N$. The pseudo-dimension of $\cL$, denoted $\Pdim(\cL)$, is the maximum size $N$ of an input set that $\cL$ can shatter.
\end{definition}

A finite pseudo-dimension guarantees uniform convergence via empirical risk minimization (ERM).

\begin{theorem}[\citet{pollard1984convergence}] \label{thm:pdim-to-generalization}
    Consider a real-valued function class $\cL = \{\ell_{\alpha}: \cX \rightarrow [-H, H] \mid \alpha \in \cA\}$ parameterized by $\alpha \in \cA$. Assume that $\Pdim(\cL)$ is finite. Then given $\epsilon > 0$ and $\delta \in (0, 1)$, for any $N \geq N(\epsilon, \delta)$, where $N(\epsilon, \delta) = \cO\left(\frac{H^2}{\epsilon^2}(\Pdim(\cL) + \log(1/\delta))\right)$, with probability at least $1 - \delta$ over the draw of $S = (x_1, \dots, x_N) \sim \cD^N$, where $\cD$ is a distribution over $\cX$, we have $\bbE_{x \sim \cD}[\ell_{\hat{\alpha}}(x)] \leq \inf_{\alpha \in \cA} \bbE_{x \sim \cD}[\ell_{\alpha}(x)] + \epsilon.$ Here $\hat{\alpha} \in \arg \min_{\alpha \in \cA} \sum_{x \in S} \ell_{\alpha}(x)$ is the ERM minimizer w.r.t.~the set of instances $S$.
\end{theorem}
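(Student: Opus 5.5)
The plan is the classical two-step route: pass from pseudo-dimension to covering numbers, and then from covering numbers to uniform convergence via symmetrization.

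\emph{Step 1: reduction to a binary class.} Define the subgraph class $\cL^+ = \{(x,\tau) \mapsto \bbI(\ell_\alpha(x) \geq \tau) \mid \alpha \in \cA\}$ on $\cX \times [-H,H]$. By Definition~\ref{def:pseudo-dimension}, the VC dimension of $\cL^+$ equals $d \triangleq \Pdim(\cL)$. This lets me apply Sauer--Shelah and Haussler-type packing bounds.

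\emph{Step 2: covering numbers.} I will use Haussler's (or Pollard's) packing lemma, which bounds the $\epsilon$-covering number in empirical $L_1$ or $L_2$ norm of a $[-H,H]$-valued class with pseudo-dimension $d$ by
\[
\mathcal{N}(\epsilon, \cL, L_2(P_N)) \leq \left(\frac{CH}{\epsilon}\right)^{\cO(d)},
\]
uniformly over samples. To keep the $\log(1/\epsilon)$ factor out of the final rate, I would then apply Dudley's entropy integral to the empirical Rademacher complexity. This gives
\[
\hat{\mathfrak{R}}_N(\cL) \leq \frac{C}{\sqrt{N}} \int_0^{H} \sqrt{d \log(CH/u)}\, du = \cO\!\left(H\sqrt{d/N}\right).
\]
This chaining step is the main obstacle. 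A direct one-shot discretization only yields $N = \cO\bigl(\tfrac{H^2}{\epsilon^2}(d\log(H/\epsilon)+\log(1/\delta))\bigr)$. If the stated rate without the logarithm is intended, the chaining is required and the integral must be checked to converge. Otherwise I would accept the $\log$ factor, consistent with Pollard's original statement up to constants.

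\emph{Step 3: uniform convergence.} Symmetrization together with McDiarmid's inequality (bounded differences of size $2H/N$) gives, with probability at least $1-\delta$,
\[
\sup_{\alpha \in \cA} \Bigl|\tfrac1N \sum_{i} \ell_\alpha(x_i) - \bbE_{\cD}\,\ell_\alpha\Bigr| \leq 2\mathfrak{R}_N(\cL) + H\sqrt{\tfrac{2\log(2/\delta)}{N}}.
\]
Setting this to at most $\epsilon/2$ gives $N \geq N(\epsilon,\delta)$ of the stated form.

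\emph{Step 4: the ERM guarantee.} Let $\alpha^*$ be $\epsilon/4$-optimal; if the infimum is not attained, add an arbitrarily small slack. Then
\[
\bbE\,\ell_{\hat\alpha} \leq \hat{\bbE}_S\,\ell_{\hat\alpha} + \tfrac{\epsilon}{2} \leq \hat{\bbE}_S\,\ell_{\alpha^*} + \tfrac{\epsilon}{2} \leq \bbE\,\ell_{\alpha^*} + \epsilon.
\]
Rescaling the constants in $\epsilon$ finishes the proof. Measurability of the supremum is assumed, as is standard (permissible classes).
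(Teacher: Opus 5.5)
The paper gives no proof of this statement. It is imported from \citet{pollard1984convergence} as a black box, so there is no internal argument to compare yours against. Your route (Haussler's packing bound, Dudley chaining, symmetrization with McDiarmid, then the three-line ERM comparison) is the standard modern proof, and it is correct.

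The one thing to change is the hedge in Step~2. The statement as written has no $\log(H/\epsilon)$ factor. Your single-scale fallback, essentially what a Pollard-style covering argument gives, would therefore prove a strictly weaker theorem, not this one. You should commit to chaining. The convergence check you worry about is immediate: with $\log \mathcal{N}(u,\cL,L_2(P_N)) \le C d\log(CH/u)$ and $C \ge 1$, the substitution $u = Hv$ gives
\[
\int_0^H \sqrt{d\log(CH/u)}\,du = H\sqrt{d}\int_0^1 \sqrt{\log(C/v)}\,dv \le H\sqrt{d}\Bigl(\sqrt{\log C} + \tfrac{\sqrt{\pi}}{2}\Bigr).
\]
This uses $\sqrt{a+b}\le\sqrt{a}+\sqrt{b}$ and $\int_0^1\sqrt{\log(1/v)}\,dv = \Gamma(3/2)$.

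Two small bookkeeping points. First, Haussler's lemma is an $L_1(P)$ statement, so say explicitly that you pass to $L_2(P_N)$ via $\|f-g\|_{L_2(P_N)}^2 \le 2H\|f-g\|_{L_1(P_N)}$; this only inflates the exponent by a constant factor. Second, the $\log 2$ inside $\log(2/\delta)$ is absorbed by $d \ge 1$, and $d = 0$ is trivial because then $\ell_\alpha(x)$ does not depend on $\alpha$.

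With these, your Step~3 yields exactly $N(\epsilon,\delta) = \cO\bigl(\frac{H^2}{\epsilon^2}(\Pdim(\cL) + \log(1/\delta))\bigr)$. The $5\epsilon/4$ slack in Step~4 is handled by the rescaling you mention. Your proof thus supplies the chaining step that the log-free form quoted in the paper relies on without saying so. The measurability assumption you flag is likewise implicit in the paper's statement.
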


\subsection{Backgrounds on Real-Algebraic Geometry}
In this section, we present the necessary background for our block elimination approach. First, we introduce the notion of \textit{first-order formula} (FOL).

\begin{definition}[{First-order formula, quantified/free variables, and polynomial first-order logic} \cite{renegar1992computational}] \label{def:first-order-formula}
    A first-order formula (FOL) $\Phi(\alpha)$ admits the form:
    \begin{equation} \label{eq:FOL}
        (q_1 \theta^{[1]} \in \bbR^{d_1}) \ldots (q_K \theta^{[K]} \in \bbR^{d_K}) \psi(\alpha, \theta^{[1]}, \ldots, \theta^{[K]}),
    \end{equation}
    where
    \begin{enumerate}[leftmargin=*]
        \item Each $q_k$ is one of the quantifiers $\exists$ or $\forall$. The sequence $\{q_k\}_{k = 1}^K$ alternates between $\exists$ and $\forall$ and we denote $K$ as the number of quantifier blocks.
        
        \item $\theta^{[1]}, \theta^{[2]}, \ldots, \theta^{[K]}$ are called the quantified variables, while $\alpha \in \bbR^p$ is called the free variable.
        
        \item $\psi(\alpha, \theta^{[1]}, \theta^{[2]}, \ldots, \theta^{[K]})$ is a boolean combination of atomic predicates of the form:
        \begin{equation*}
            P_j(\alpha, \theta^{[1]}, \theta^{[2]}, \ldots, \theta^{[K]}) \, \chi_j \, 0,
        \end{equation*}
        where $\chi_j \in \{>, \geq, <, \leq, =, \neq\}$ is relational operator.
    \end{enumerate}
    The formula $\psi$ is called the quantifier-free part of $\Phi$. A FOL $\Phi$ is a \emph{polynomial} FOL if each $P_j$ is a polynomial of $\alpha$ and $\theta^{[1]}, \dots, \theta^{[K]}$. 
\end{definition}

While our bi-level optimization naturally maps to a polynomial FOL, applying standard QE yields conservative upper bounds. To derive a sharper sample complexity, we track logical invariance across geometric regions rather than isolated points. We formalize these regions using sign conditions.

\begin{definition}[Sign conditions and their realizations]
    Let $\cP = \{P_1, \dots, P_s\} \subset \bbR[z]$ be a finite set of polynomials in $z \in \bbR^p$, and let $Z \subset \bbR^p$. A sign condition on $\cP$ is a vector $\sigma = (\sigma_1, \dots, \sigma_s) \in \{-1, 0, 1\}^s$. The realization $\Reali_{\cP}(\sigma, Z)$ of sign condition $\sigma$ over $Z$ is a set of $z \in Z$ such that $\sign(P_i(z)) = \sigma_i$ for every $i = 1, \dots, s$, that is,
    \[
        \Reali_\cP(\sigma, Z) = \{z \in Z \mid \sign(P_i(z)) = \sigma_i, i = 1, \dots, s\}.
    \]
    We then denote $\Sign(\cP, Z)$ the set of all possible sign conditions $\sigma$ that we can have by varying $z \in Z$, that is
    \[
        \Sign(\cP, Z) = \{\sigma \in \{-1, 0, 1\}^s \mid \Reali_\cP(\sigma, Z) \neq \emptyset\}.
    \]
    Equivalently, $\Sign(\cP, Z) = \{\sign(\cP(z)): z \in Z\}$, where we abuse notation and write
    \[
        \sign(\cP(z)) = (\sign(P_1(z)), \dots, \sign(P_s(z)).
    \]
    When $Z = \bbR^p$, we shorten the notation and write $\Reali_\cP(\sigma) = \Reali_\cP(\sigma, \bbR^p)$ and $\Sign(\cP, \bbR^p) = \Sign(\cP)$.
\end{definition}

\begin{definition}[Connected sign cells] \label{def:connected-sign-cell}
    Let $\cP \subset \bbR[z]$ be a finite set of polynomials. For each realizable sign pattern $\sigma \in \Sign(\cP)$, let $\Cc(\Reali_\cP(\sigma))$ denote the set of connected components of the realization $\Reali_\cP(\sigma)$. We then denote $\Cell(\cP)$ the set of connected sign cells induced by $\cP$, that is
    \[
        \Cell(\cP) = \bigcup_{\sigma \in \Sign(\cP)} 
        \Cc(\Reali_\cP(\sigma)).
    \]
    We further denote $b_0(S)$ the number of connected components of $S$, and therefore
    \[
        |\Cell(\cP)| = \sum_{\sigma \in \Sign(\cP)}b_0(\Reali_\cP(\sigma)).
    \]
\end{definition}

\begin{figure}[t]
    \centering

    \begin{tikzpicture}[
        x=0.72cm,
        y=0.72cm,
        font=\footnotesize
    ]
        \draw[
            line width=6pt,
            red!30,
            line cap=butt
        ] (-3.6,0) -- (-1,0);

        \draw[
            line width=6pt,
            blue!30,
            line cap=butt
        ] (-1,0) -- (1,0);

        \draw[
            line width=6pt,
            red!30,
            line cap=butt
        ] (1,0) -- (3.6,0);

        \draw[->,thick]
        (-3.8,0) -- (3.9,0)
        node[right] {$z$};

        \filldraw[
            fill=orange!80,
            draw=black,
            line width=0.4pt
        ] (-1,0) circle (2.5pt);

        \filldraw[
            fill=orange!80,
            draw=black,
            line width=0.4pt
        ] (1,0) circle (2.5pt);

        \node[below=4pt] at (-1,0) {$-1$};
        \node[below=4pt] at (1,0) {$1$};

        \node at (-2.3,0.47) {$C_1$};
        \node at (0,0.47) {$C_3$};
        \node at (2.3,0.47) {$C_5$};

        \draw[->,thin]
        (-1,0.95) node[above] {$C_2$}
        -- (-1,0.14);

        \draw[->,thin]
        (1,0.95) node[above] {$C_4$}
        -- (1,0.14);

    \end{tikzpicture}

    \caption{
        A demonstration of sign conditions, realizations, and cells for $\cP = \{P(z) = z^2 - 1\} \subset \bbR[z]$. The realization $\Reali_\cP(1)$ yields two connected components: $C_1 = (-\infty, -1)$ and $C_5 = (1, \infty)$. Conversely, $\Reali_\cP(-1)$ yields a single component $C_3 = (-1, 1)$, and $\Reali_\cP(0)$ yields $C_2 = \{-1\}$ and $C_4 = \{1\}$. Ultimately, $\cP$ induces $\vert{}\Cell(\cP)\vert{} = 5$ connected sign cells: $\Cell(\cP) = \{C_1, C_2, C_3, C_4, C_5\}$.
    }
    \label{fig:connected-sign-cells}
\end{figure}

Figure~\ref{fig:connected-sign-cells} demonstrates the concepts above. We now formulate block elimination in terms of these connected sign cells. Given polynomials in free variables $z$ and variables $t$ to be eliminated, the block elimination process aims to produce a set of polynomials in~$z$ whose connected sign cells preserve the sign information attainable by varying $t$. Critically, our multi-dimensional tuning problem involves distinct blocks of quantified variables that must be eliminated sequentially. Evaluating them simultaneously destroys this block-wise grouping, leading to topological under-counting. To bypass this algebraic inflation, we track sign invariance sequentially by constructing nested profiles.

\begin{definition}[Nested block-sign profiles] \label{def:nested-block-sign-profiles}
    Let $\cP = \{P_1, \dots, P_s\} \subset \bbR[z, t^{(1)}, \dots, t^{(K)}]$, where $z \in \bbR^p$ and $t^{(k)} \in \bbR^{d_k}$ for $k = 1, \dots, K$. We define the terminal sign vector $\Sigma_K(z, t^{(1)}, \dots, t^{(K)}) = \sign(\cP(z, t^{(1)}, \dots, t^{(K)}))$. Recursively, for $k = K, \dots, 1$, we define
    \begin{align*}
        &\Sigma_{k - 1}(z, t^{(1)}, \dots, t^{(k - 1)})\!=\! \\
        & \qquad \qquad  \{\Sigma_k(z, t^{(1)}, \dots, t^{(k - 1)}, u)\!:\!u \in \bbR^{d_k}\}.
    \end{align*}
    Then, we call $\Sigma_0(z)$ the nested block-sign profile of $\cP$ at a fixed point $z \in \bbR^p$, relative to the ordered variable blocks $t^{(1)}, \dots, t^{(K)}$.
\end{definition}

Roughly speaking, in Definition~\ref{def:nested-block-sign-profiles}, we can think of the \textit{terminal sign vector} $\Sigma_K$ as an ordinary sign vector, and subsequently $\Sigma_{K - 1}$ is a set of sign vectors, and so on. Finally, the nested block-sign profile $\Sigma_0(z)$ can be thought of as a depth-$K$ sign tree. Specifically, unlike the ordinary set of sign conditions obtained by varying all quantified variables simultaneously, $\Sigma_0(z)$ retains the block-wise grouping of realizable sign patterns, which is crucial in our analysis. Based on this definition, we can now formally define the \textit{nested sign-invariant projection} as follows.

\begin{definition}[Nested sign-invariant projection] \label{def:nested-sign-invariant-projection}
    A set of polynomials $Q \subset \bbR[z]$ is a nested sign-invariant projection of $\cP \subset \bbR[z, t^{(1)}, \dots, t^{(K)}]$, relative to the ordered blocks $t^{(1)}, \dots, t^{(K)}$, if the nested block-sign profile $\Sigma_0(z)$ is constant on every connected sign cell induced by $\cQ$. That is, for every $C \in \Cell(\cQ)$ and every $z_1, z_2 \in C$, we have $\Sigma_0(z_1) = \Sigma_0(z_2)$.
\end{definition}

Specially, when $K = 1$, we have $\Sigma_0(z) = \{\sign(\cP(z, u): u \in \bbR^{d_1}\} = \Sign(\cP_z, \bbR^{d_1})$. Thus, in the one-block case, Definition~\ref{def:nested-sign-invariant-projection} simply requires the set of realizable sign conditions to be unchanged on every connected sign cell induced by $\cQ$. To help the audience quickly grasp the concepts of nested block-sign profiles in Definition~\ref{def:nested-block-sign-profiles} and nested-sign invariant projection in Definition~\ref{def:nested-sign-invariant-projection}, we present the following simple example.

\begin{example}
    Let $z, u, v \in \bbR$, with ordered quantified blocks $t^{(1)} = u$ and $t^{(2)} = v$. Consider the set of polynomials $\cP = \{P_1, P_2\} \subset \bbR[z, u, v]$, where $P_1(z, u, v) = u^2 - z$ and $P_2(z, u, v) = v$. Then the terminal sign vector $\Sigma_2(z, u, v)$ is $ \Sigma_2(z, u, v) = (\sign(u^2 - z), \sign(v)).$ For $a \in \{-1, 0, 1\}$, we define $E_a = \{(a, -1), (a, 0), (a, 1)\}$, then after varying the innermost block $v$, we obtain $\Sigma_1(z, u) = E_{\sign(u^2 - z)}$. After subsequently varying $u$, the nested sign profile $\Sigma_0(z)$ becomes
    \[
        \Sigma_0(z) =\begin{cases}
            \{E_1\}, & z< 0, \\
            \{E_0, E_1\}, &z = 0,\\
            \{E_{-1}, E_0, E_1\}, &z > 0.
        \end{cases}
    \]
    Consequently, $\cQ = \{z\}$ is a nested sign-invariant projection of $\cP$ as $\Sigma_0(z)$ is constant on each of the three connected sign cells $(-\infty, 0), \{0\}$, and $(0, \infty)$ induced by $\cQ$. 
\end{example}

To rigorously bound the statistical complexity of bi-level hyperparameter tuning, we must first formalize the implicit dependency in the validation loss. We achieve this by formulating continuous-optimization objectives in polynomial first-order logic (FOL).
 
\begin{theorem}[{{Nested block elimination}, Adapted from \citet[Chapter~14.1]{basu2006algorithms}}] \label{thm:block-elimination}
    Fix $K \geq 1$, let $z \in \bbR^p$, $t^{(k)} \in \bbR^{d_k}$ for $k = 1, \dots, K$, and let $\cP \subset \bbR[z, t^{(1)}, \dots, t^{(K)}]$ be a finite set of polynomials satisfying (i) $|\cP| \leq s$, and (ii) $\deg(R) \leq \Delta$ for every $R \in \cP$. Then there exists a nested sign-invariant projection $\cQ \subset \bbR[z]$ of $\cP$, relative to the ordered blocks $t^{(1)}, \dots, t^{(K)}$, such that the nested block-sign profile $\Sigma_0(z)$ is constant on every $C \in \Cell(\cQ)$. Moreover, defining $A_K \triangleq \prod_{k = 1}^K (d_k + 1), B_K \triangleq \prod_{k = 1}^K d_k,$ then there exists a constant $c_K$, depending only on $K$, such that 
    \[
        |\cQ| \leq s^{A_K}\Delta^{c_KB_K}, \textup{ and }\max_{R \in \cQ}\deg(R) \leq \Delta^{c_KB_K}.
    \]
\end{theorem}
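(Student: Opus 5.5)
We prove the statement by induction on the number of blocks $K$. At each step we eliminate the innermost block $t^{(K)}$ with the single-block elimination theorem of \citet[Algorithm~14.1/Theorem~14.2]{basu2006algorithms}. In our notation, that result reads as follows. Let $\cP\subset\bbR[y,t]$ with $y\in\bbR^{m}$, $t\in\bbR^{d}$, $|\cP|\le s$ and degrees at most $\Delta$. Then there is a family $\cP'\subset\bbR[y]$ with $|\cP'|\le s^{d+1}\Delta^{O(d)}$ and degrees at most $\Delta^{O(d)}$ such that the set $\Sign(\cP_y,\bbR^{d})$ is constant as $y$ ranges over each connected sign cell of $\cP'$. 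Equivalently, the map $y\mapsto \{\sign(\cP(y,u)):u\in\bbR^d\}$ depends only on the cell $C\in\Cell(\cP')$ containing $y$. The base case $K=1$ is exactly this statement with $y=z$, since $\Sigma_0(z)=\Sign(\cP_z,\bbR^{d_1})$. It gives $|\cQ|\le s^{d_1+1}\Delta^{c_1 d_1}$ and degree at most $\Delta^{c_1d_1}$.

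For the inductive step we apply the one-block result with $y=(z,t^{(1)},\dots,t^{(K-1)})$ and $t=t^{(K)}$. This produces $\cP_1\subset\bbR[z,t^{(1)},\dots,t^{(K-1)}]$ such that $\Sigma_{K-1}(z,t^{(1)},\dots,t^{(K-1)})$ is constant on each connected sign cell of $\cP_1$. The key point is that $\Sigma_{K-1}$ is a function of the cell alone. Since connected sign cells refine sign conditions, and the one-block theorem in BPR is in fact stated for sign conditions (realizable sign conditions are a semi-algebraic invariant preserved along a sign condition, because the cylindrical decomposition is sign-invariant), we may take $\Sigma_{K-1}$ to be a function $F(\sign(\cP_1(\cdot)))$ of the sign vector of $\cP_1$. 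If only cell-constancy is available, we first add to $\cP_1$ polynomials separating the connected components. For example, we add the projection polynomials of a cylindrical decomposition. This costs only a polynomial blow-up in degree and count of the same shape.

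Given this, $\Sigma_{K-2}(z,t^{(1)},\dots,t^{(K-2)})=\{F(\sigma):\sigma\in\text{realizable signs of }\cP_1\text{ over }t^{(K-1)}\}$ is determined by the nested profile of $\cP_1$ with respect to the $K-1$ blocks. Likewise $\Sigma_0$ for $\cP$ is the image under $F$, applied levelwise, of $\Sigma_0$ for $\cP_1$. Hence a nested sign-invariant projection $\cQ$ of $\cP_1$ for the blocks $t^{(1)},\dots,t^{(K-1)}$ is also one for $\cP$, and the induction hypothesis applies to $\cP_1$ with $s_1=s^{d_K+1}\Delta^{c d_K}$ and $\Delta_1=\Delta^{cd_K}$.

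It remains to track the bounds. Induction gives $|\cQ|\le s_1^{A_{K-1}}\Delta_1^{c_{K-1}B_{K-1}}=s^{(d_K+1)A_{K-1}}\Delta^{cd_KA_{K-1}+cd_Kc_{K-1}B_{K-1}}$. The exponent of $s$ is exactly $A_K$. The exponent of $\Delta$, however, contains $d_KA_{K-1}$, which is not bounded by a constant times $B_K=\prod d_k$ in general (for example, when some $d_k$ is small and $A$ greatly exceeds $B$). This is the main obstacle. To get the stated $\Delta^{c_KB_K}$, I would first use the sharper form of the BPR elimination bound. That form counts $|\cP_1|$ as $s^{d+1}\Delta^{O(d)}$ but, more importantly, controls the degrees of the output polynomials by $\Delta^{O(d)}$ independently of $s$. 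I would then also bound the count of $\cQ$ by carrying the $s$-dependence and $\Delta$-dependence separately through the recursion, so that the $\Delta$-exponent contributions multiply as $\prod_k O(d_k)=c_KB_K$. If needed, I would follow BPR's Theorem 14.3 directly, which eliminates all blocks in one parametrized pass and yields exactly these product exponents. The degree bound $\Delta^{c_KB_K}$ follows cleanly from the recursion $\Delta_k=\Delta_{k+1}^{cd_k}$.
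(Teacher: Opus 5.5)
The paper does not prove this statement; it imports the multi-block result from \citet[Ch.~14.1]{basu2006algorithms}. Your reduction to the one-block theorem has a genuine gap in the inductive step.

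To conclude that a nested projection of $\cP_1$ is also one for $\cP$, you need $\Sigma_{K-1}=F(\sign(\cP_1(\cdot)))$, i.e.\ constancy on whole sign conditions of $\cP_1$. The one-block guarantee, as you quote it and as BPR state it, only gives constancy on connected sign cells, and this does not upgrade. Take outer variable $y\in\bbR$, eliminated variable $u\in\bbR$, and $\cP=\{u^2-y^2+1,\,u-y\}$. The family $\{y^2-1\}$ yields a $\cP$-sign-invariant cylindrical decomposition: the resultant in $u$ is $1$, so the roots $\pm\sqrt{y^2-1}$ and $y$ never collide. Hence $\{y^2-1\}$ is a valid one-block projection. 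Yet $(0,1)\in\Sign(\cP_y,\bbR)$ for $y<-1$ but not for $y>1$, although both half-lines lie in the sign condition $y^2-1>0$. So your parenthetical appeal to sign-invariant cylindrical decompositions is a non sequitur.

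Your fallback does not work either. Adding projection polynomials of a cylindrical decomposition in the $p+d_1+\dots+d_{K-1}$ variables of $\cP_1$ costs counts and degrees doubly exponential in that number of variables, in particular depending on $p$. That destroys the $p$-free bounds $s^{A_K}\Delta^{c_KB_K}$ which the statement, and its use in Theorem~\ref{thm:general-pdim-upper-bound-via-block-eliminiation}, require.

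What is missing is a one-block lemma in cell-transport form: if $F$ is constant on every connected sign cell of $\cP\subset\bbR[y,u]$, then $y\mapsto\{F(y,u):u\in\bbR^d\}$ is constant on every connected sign cell of $\cQ$. Equivalently, the set of cells of $\cP$ met by the fibre $\{y\}\times\bbR^d$ is constant on each cell of $\cQ$. This is strictly stronger than invariance of $\Sign(\cP_y,\bbR^d)$. For $P=-((u-5)^2+y-1)((u+5)^2-y-1)$, every fibre realizes all three signs, so $\cQ=\emptyset$ is $\Sign$-invariant. But $\{P>0\}$ has two components, one over $y<1$ and one over $y>-1$, and the fibres trade one for the other as $y$ crosses $[-1,1]$.

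In BPR this extra information comes from the parametrized univariate representations returned together with $\mathrm{BElim}$. They meet every connected component of every realizable sign condition of each fibre. Having fixed Thom encodings over a connected cell of $\mathrm{BElim}$, they move continuously and so stay in a single cell of $\cP$. This, not bare $\Sign$-invariance, is what legitimizes iterating $\mathcal{B}_{k-1}=\mathrm{BElim}_{t^{(k)}}(\mathcal{B}_k)$ with $F=\Sigma_k$.

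With that lemma your induction closes, and your bookkeeping already suffices. The obstacle you flag in the $\Delta$-exponent is illusory: $d_k\ge 1$ gives $A_{K-1}\le 2^{K-1}B_{K-1}$, hence $c\,d_KA_{K-1}+c\,c_{K-1}d_KB_{K-1}\le c\,(2^{K-1}+c_{K-1})B_K$. So $c_K:=c\,(2^{K-1}+c_{K-1})$ depends only on $K$, and no sharper elimination bound or one-pass theorem is needed.
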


Finally, we recall a useful result that allows us to give an upper bound for the number of elements in the set of connected sign cells $\Cell(\cQ)$ corresponding to a set of polynomials $\cQ$.

\begin{lemma}[{\citet[Theorem~1.1]{basu2005betti}}] \label{lm:cc-upper-bound}
    Let $\cQ \subset \bbR[z]$ be a set of $s$ polynomials in $z \in \bbR^p$ of degree at most $\Delta$, where $s, p, \Delta \geq 1$. Then there exists an universal constant $C$ such that $|\Cell(\cQ)| \leq \left(\frac{Cs\Delta}{p}\right)^p$ if $s \geq p$, and $|\Cell(\cQ)| \leq \left(C\Delta\right)^p$ if $s < p$.
\end{lemma}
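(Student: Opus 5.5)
This is the cited cell-counting bound of Basu, Pollack and Roy, so the plan is to reconstruct it from its two standard ingredients rather than invoke it directly.

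\textbf{Step 1: reduce to connected components of one set.} Each sign cell $C \in \Cell(\cQ)$ is a connected component of some $\Reali_\cQ(\sigma)$. We perturb the polynomials with small parameters $0<\varepsilon_1<\dots<\varepsilon_{2s}$. With $\cQ = \{Q_1,\dots,Q_s\}$, consider the $2s$ polynomials $Q_i \pm \varepsilon_j$. Every cell then contains a point lying in a distinct connected component of a set defined by strict inequalities or equalities of the perturbed family. In particular, every cell, including those lying on zero sets, contributes a distinct bounded component of a set of the form
\[
\{z : Q_{i}(z)\pm\varepsilon = 0 \text{ for } i\in I,\ |z|^2 \le 1/\varepsilon\}.
\]
Here $I$ ranges over index sets of size at most $p$, using general position after the perturbation.

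\textbf{Step 2: count components for each $I$.} Counting components of a real algebraic set of degree at most $\Delta$ defined by $|I|\le p$ equations in $\bbR^p$ uses the Oleinik--Petrovsky--Milnor--Thom bound: at most $\Delta(2\Delta-1)^{p-1} = \cO(\Delta)^p$ components. To apply it, replace the system by a single sum of squares and take a critical point count of a generic linear function. The regular nonempty sign conditions on the perturbed family can be counted as $\sum_{j\le p}\binom{2s}{j}4^j\,\cO(\Delta)^p$.

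\textbf{Step 3: sum over index sets.} When $s\ge p$, we have $\sum_{j\le p}\binom{4s}{j}\le (4es/p)^p$. Hence
\[
|\Cell(\cQ)| \le (4es/p)^p(C'\Delta)^p = (Cs\Delta/p)^p .
\]
When $s<p$, the sum is at most $2^{4s}\le 16^{p}$. This is absorbed into $(C\Delta)^p$ after adjusting the constant, using $\Delta\ge1$.

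\textbf{Main obstacle.} The delicate part is Step 1. We must show that distinct connected components of the same or different sign realizations map injectively to components of the perturbed algebraic sets. We must also guarantee that only $j\le p$ equations are ever needed simultaneously.

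For injectivity, I would first try the standard argument: pass to an infinitesimal (Puiseux) extension, intersect with a large ball, and use that a generic perturbation makes the hypersurfaces $Q_i=\pm\varepsilon_j$ meet transversally. Transversality forces at most $p$ of them to meet at any point. Each cell then contains a local extremum of a distance or linear function restricted to a closure of the perturbed region, and that extremum lies on such an intersection.

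The remaining routine points are to check that unbounded cells are handled by the ball, and that the $\sigma_i=0$ cases are covered by the $\pm\varepsilon$ squeezing.
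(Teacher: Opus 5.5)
The paper does not prove this lemma. It imports it from Basu--Pollack--Roy, whose Theorem~1.1 (with $i=0$ and ambient set $\bbR^p$) gives $|\Cell(\cQ)|\le\sum_{j=0}^{p}\binom{s}{j}4^j\Delta(2\Delta-1)^{p-1}$. Your Step~3 is exactly the arithmetic that turns such a bound into the two stated regimes; the mismatch between $\binom{2s}{j}4^j$ in Step~2 and $\binom{4s}{j}$ in Step~3 is harmless. Your overall architecture is the standard one, but the mechanism you propose for Step~1 does not give what Step~2 needs. A minimum of a linear function on the closed perturbed region of a cell is only a \emph{point} of some $V_I$. The connected component of $V_I$ through that point need not stay inside that region, so the induced map from cells to components of the sets $V_I$ is not injective, while Step~2 only counts components. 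Concretely, take $p=2$, $Q_1=z_2-(z_1^2-1)^2$, $Q_2=z_1$. The cells with signs $(+,+)$ and $(+,-)$ have perturbed regions $\{Q_1\ge\varepsilon,\ Q_2\ge\varepsilon\}$ and $\{Q_1\ge\varepsilon,\ Q_2\le-\varepsilon\}$. The minima of $z_2$ (or of any small tilt of it) sit at or near $(1,\varepsilon)$ and $(-1,\varepsilon)$, where only $Q_1=\varepsilon$ is active. Both lie on the single connected curve $\{Q_1=\varepsilon\}$.

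The missing idea is a maximality (first-exit) argument. Let $\bar R_C$ be the component, containing a chosen point of $C$, of the closed thickening of $C$'s sign condition. Choose a set $I$ of its boundary polynomials, maximal with $Z(I)\cap\bar R_C\neq\emptyset$. Suppose a component $D$ of $Z(I)$ meeting $\bar R_C$ left $\bar R_C$. Then at the first exit point along a path in $D$, some boundary polynomial outside $I$ would vanish at a point of $\bar R_C$, contradicting maximality. Hence all of $D$ lies in $\bar R_C$, and injectivity reduces to the sets $\bar R_C$ being pairwise disjoint.

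A single $\pm\varepsilon$ squeezing does not give this disjointness, since $\{|Q_i|\le\varepsilon\}$ and $\{Q_i\ge\varepsilon\}$ share the level set $Q_i=\varepsilon$. You need two scales: $\sigma_iQ_i\ge\varepsilon_i$ for nonzero signs and $|Q_i|\le\delta_i$ with $\delta_i\ll\varepsilon_i$ for zero signs. You also need a limit argument showing that distinct components of the same $\Reali_\cQ(\sigma)$ fall into distinct components of the thickening. Send $\delta\to0$ before $\varepsilon$, inside a large ball whose sphere you add as one more boundary polynomial.

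Because $Q_i\pm\varepsilon_i$ and $Q_i\pm\delta_i$ pairwise have no common zero, and generic values put distinct indices in general position, $I$ involves at most $p$ indices (counting the sphere) with one of four perturbations each. This is precisely the source of $\binom{s}{j}4^j$. Alternatively, you could keep your extremum points. But then you must bound the number of critical points of a generic linear function on each $Z(I)$, which is a different B\'ezout count from the component count of Step~2.
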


\section{Problem Settings}

\begin{figure}[t]
    \centering

    \begin{tikzpicture}[x=0.95cm,y=0.95cm]
        \def\xmin{-2.8}
        \def\xmax{4.2}
        \def\ymin{-2.0}
        \def\ymax{2.0}

        \fill[red!10]
            (\xmin,\ymin) rectangle (\xmax,\ymax);

        \fill[blue!20]
            plot[
                domain=\ymin:\ymax,
                samples=120,
                variable=\t
            ]
            ({\t*\t},{\t})
            -- (\xmax,\ymax)
            -- (\xmax,\ymin)
            -- cycle;

        \draw[->,gray!70]
            (\xmin,0) -- (\xmax+0.15,0)
            node[right] {$z_1$};

        \draw[->,gray!70]
            (0,\ymin) -- (0,\ymax+0.15)
            node[above] {$z_2$};

        \draw[line width=0.45mm]
            plot[
                domain=\ymin:\ymax,
                samples=120,
                variable=\t
            ]
            ({\t*\t},{\t});

        \node at (-1.45,1.25)
            {$f_{(-1)}(\boldsymbol{z})$};

        \node at (2.65,-0.55)
            {$f_{(1)}(\boldsymbol{z})$};

        \node (boundarylabel) at (3.05,1.55)
            {$f_{(0)}(\boldsymbol{z})$};

        \draw[->,thick]
            (boundarylabel.west) -- (1.5625,1.25);

    \end{tikzpicture}

    \caption{
        An example of a piecewise polynomial function $f(z)$ (Definition~\ref{def:piecewise-poly-def}) with parabolic boundary $h_1(z)=z_1-z_2^2$. The function evaluates to $f_{(-1)}(z) = z_1$ in the pink region ($\sigma(z)=-1$), $f_{(1)}(z) = z_2$ in the blue region ($\sigma(z)=1$), and $f_{(0)}(z) = z_1^2 + z_2^2$ on the black boundary ($\sigma(z)=0$). With 1 boundary, 3 pieces, and maximum polynomial degree 2, the function's complexity is $(1, 3, 2)$.
    }
    \label{fig:piecewise-poly-parabola}
\end{figure}

Our data-driven framework models hyperparameter tuning across three foundational spaces: a problem instance space $\mathcal{X}\subset\mathbb{R}^{q}$, a continuous hyperparameter space $\mathcal{A}\subset\mathbb{R}^{p}$, and a model parameter space $\Theta\subseteq\mathbb{R}^{d}$.  To ensure analytical compactness, we restrict the parameter domains to bounding boxes, defining $\mathcal{A}=[\alpha_{min},\alpha_{max}]^{p}$ and $\Theta=[\theta_{min},\theta_{max}]^{d}$.  For any given instance $x\in\mathcal{X}$, the learning procedure evaluates a model parameterized by $\theta\in\Theta$ alongside hyperparameters $\alpha\in\mathcal{A}$ via two distinct objectives:  
\begin{itemize}[leftmargin=*]
    \item \textbf{Training Objective} $f:\mathcal{X}\times\mathcal{A}\times\Theta\rightarrow[-H, H]$: The surrogate loss directly minimized by the algorithm during training.
    \item \textbf{Validation Objective} $g:\mathcal{X}\times\mathcal{A}\times\Theta\rightarrow[-H, H]$: The true target metric utilized to assess the quality of the model's out-of-sample performance. 
\end{itemize}
Here, $H$ acts as a positive constant bounding the function values. 

\paragraph{The Bi-level Optimization Objective.} The ultimate performance of a hyperparameter configuration $\alpha$ on a specific problem instance $x$ is quantified by the induced loss $\ell_{\alpha}(x)$. Because hyperparameters generally dictate model behavior implicitly through the training procedure, $\ell_{\alpha}(x)$ is naturally cast as a bi-level optimization problem $\ell_{\alpha}(x)=\inf_{\theta\in\mathcal{S}(x,\alpha)}g(x,\alpha,\theta)$ where the set of optimal lower-level model parameters is defined as $\mathcal{S}(x,\alpha)=\arg\min_{\theta\in\Theta}f(x,\alpha,\theta)$. By evaluating the infimum over the set $\mathcal{S}(x,\alpha)$, this framework employs the standard optimistic bi-level formulation. This guarantees that the validation loss remains strictly well-defined even if the inner surrogate optimization yields a non-singleton set of minimizers.  

\paragraph{Statistical Learning Goal.} We assume the existence of an unknown, application-specific distribution $\mathcal{D}$ from which problem instances are drawn. The theoretical goal of data-driven tuning is to isolate a configuration $\alpha^{*}$ that minimizes the expected validation loss, i.e.,  $\alpha^{*}\in\arg\min_{\alpha\in\mathcal{A}}\mathbb{E}_{x\sim\mathcal{D}}[\ell_{\alpha}(x)]$. Given that the true distribution $\mathcal{D}$ is unobservable, the learner is instead provided with a finite dataset $S=\{x_{1},...,x_{N}\}\sim\mathcal{D}^{N}$ consisting of $N$ training instances. The task therefore reduces to Empirical Risk Minimization (ERM), where the algorithm computes an empirical minimizer $\hat{\alpha}(S)$, that is, $\hat{\alpha}(S)\in\arg\min_{\alpha\in\mathcal{A}}\frac{1}{N}\sum_{i=1}^{N}\ell_{\alpha}(x_{i})$.

\paragraph{Structural Assumptions.} As in prior works \citep{balcan2025sample,le2026provably}, a central theoretical pillar of our subsequent complexity analysis requires the objective functions to exhibit specific algebraic geometries.

\begin{definition}[Piecewise polynomial function {\cite{balcan2025sample}}] 
	\label{def:piecewise-poly-def}
	A real-valued function $f$ admits a \textit{piecewise polynomial structure} with complexity $(M_f, T_f, \Delta_f)$ if there exists a set of boundary polynomials $\mathbb{H} = \{h_1, \dots, h_{M_f}\}$ and a set of value polynomials $\mathbb{F} = \{f_{\boldsymbol{\sigma}}\}_{\boldsymbol{\sigma} \in \Sigma_f} \subset \bbR[z]$, where $\Sigma_f \subseteq \{-1, 0, 1\}^{M_f}$ and $|\Sigma_f| \leq T_f$, both with polynomial degrees at most $\Delta_f$, such that for any $z$, we have $f(z) = f_{\boldsymbol{\sigma}(z)}(z)$, where $\boldsymbol{\sigma}(z) \in \Sigma_{f}$ is the sign pattern of $z$ with respect to the set of functions $\mathbb{H}$, i.e., $\boldsymbol{\sigma}(z)_j = \sign(h_{j}(z))$. The functions in $\mathbb{F}$ are called the \textit{piece functions} and the functions in $\mathbb{H}$ are called the \textit{boundary functions}.
\end{definition}

\begin{assumption}[Structural assumption \cite{le2026provably}] \label{asmp:structural}
    Given any problem instance $x$, the dual parameter-dependent training objective $f_{x}(\alpha, \theta) \triangleq f(x, \alpha, \theta)$ and dual parameter-dependent validation objective $g_{x}(\alpha, \theta) \triangleq g(x, \alpha, \theta)$ admits piecewise polynomial structure as functions of $\alpha$ and $\theta$ with complexity  $(M_f, T_f, \Delta_f)$ and $(M_g, T_g, \Delta_g)$ that are independent of the problem instance $x$, respectively. Moreover, the set $\cS(x,\alpha)$ of minimizes of $f(x,\alpha,\cdot)$ is non-empty, for all $(x,\alpha)$.
\end{assumption}

\begin{remark}[Ubiquity of structure]
    As noted in prior works \cite{balcan2025sample, le2026provably}, this structural assumption is ubiquitous. It has been formally established across a wide range of applications in both classical learning theory \cite{bartlett1998almost, montufar2014number, bartlett2019nearly}) and data-driven algorithm design \cite{balcan2021much, balcan2023new, nguyen2026provably}.
\end{remark}

Although recent work has successfully extended data-driven algorithm design to multi-dimensional hyperparameters and bi-level objectives~\cite{le2026provably}, their sample complexity guarantees are fundamentally suboptimal. By analyzing invariant connected sign cells rather than relying on the inflated algebraic dependencies of Quantifier Elimination, we now present a \textit{nested block elimination framework} that entirely eliminates this theoretical looseness.

\section{A General Learning-theoretic Complexity Framework via Block Elimination} \label{sec:general-framework-main}

In this section, we introduce a new general upper bound for learning-theoretic complexity by leveraging nested block elimination. First, we establish Proposition~\ref{prop:cellwise-logical-invariance}, which guarantees that the truth value of a polynomial first-order logic formula remains invariant across the connected sign cells of a nested sign-invariant projection.

\begin{proposition}[Cellwise logical invariance] \label{prop:cellwise-logical-invariance}
    Let $\Phi(z) = (q_1t^{(1)} \in \bbR^{d^{(1)}})\dots(q_Kt^{(K)} \in \bbR^{d^{(K)}}) \psi(z, t^{(1)}, \dots, t^{(K)})$ be a polynomial FOL, and let $\cP$ be the family of atomic polynomials appearing in the atomic predicates of $\psi$. If $\cQ$ is a nested sign-invariant projection of $\cP$ relative to the ordered blocks $t^{(1)}, \dots, t^{(K)}$, then $\Phi$ has a constant truth value on every connected sign cell $C \in \Cell(\cQ)$. In other words, for every $C \in \Cell(\cQ)$ and every $z_1, z_2 \in C$, we have $\Phi(z_1) \Leftrightarrow \Phi(z_2)$.
\end{proposition}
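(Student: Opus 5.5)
The plan is to show that the truth value of $\Phi(z)$ is a function of the nested block-sign profile $\Sigma_0(z)$ alone. The proposition then follows immediately from Definition~\ref{def:nested-sign-invariant-projection}: if $z_1, z_2$ lie in the same cell $C \in \Cell(\cQ)$, then $\Sigma_0(z_1) = \Sigma_0(z_2)$, and so $\Phi(z_1) \Leftrightarrow \Phi(z_2)$.

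To make this precise, I will define, by backward recursion over $k = K, \dots, 0$, Boolean evaluation maps $\mathrm{Ev}_k$ on the objects that appear at level $k$ of Definition~\ref{def:nested-block-sign-profiles}.

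\emph{Base level.} At level $K$, the quantifier-free part $\psi$ is a Boolean combination of atoms $P_j \,\chi_j\, 0$. Each atom's truth value is determined by $\sign(P_j)$ and the relation $\chi_j$. Hence there is a Boolean function $\mathrm{Ev}_K$ on $\{-1,0,1\}^s$ satisfying
\[
\psi(z, t^{(1)}, \dots, t^{(K)}) = \mathrm{Ev}_K(\Sigma_K(z, t^{(1)}, \dots, t^{(K)})).
\]
Atoms repeating the same polynomial, or several relations on one polynomial, cause no difficulty, since $\cP$ is the set of atomic polynomials.

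\emph{Inductive step.} For $k \ge 1$, given $\mathrm{Ev}_k$, define $\mathrm{Ev}_{k-1}$ on a set $S$ of level-$k$ objects as follows:
\begin{itemize}
\item if $q_k = \exists$, set $\mathrm{Ev}_{k-1}(S) = \bigvee_{\tau \in S} \mathrm{Ev}_k(\tau)$;
\item if $q_k = \forall$, set $\mathrm{Ev}_{k-1}(S) = \bigwedge_{\tau \in S} \mathrm{Ev}_k(\tau)$.
\end{itemize}

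The key claim, proved by induction on $k$ going downward, is
\[
\Phi_k(z, t^{(1)}, \dots, t^{(k)}) = \mathrm{Ev}_k(\Sigma_k(z, t^{(1)}, \dots, t^{(k)})),
\]
where $\Phi_k := (q_{k+1} t^{(k+1)}) \cdots (q_K t^{(K)})\, \psi$ is the partially quantified formula with $t^{(1)}, \dots, t^{(k)}$ left free.

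For the inductive step, take the $\exists$ case. We have $\Phi_{k-1}(z, t^{(<k)})$ true iff there is some $u \in \bbR^{d_k}$ with $\Phi_k(z, t^{(<k)}, u)$ true. By the induction hypothesis this holds iff some $u$ gives $\mathrm{Ev}_k(\Sigma_k(z, t^{(<k)}, u)) = 1$. That in turn holds iff some element $\tau$ of the set $\Sigma_{k-1}(z, t^{(<k)}) = \{\Sigma_k(z, t^{(<k)}, u) : u\}$ has $\mathrm{Ev}_k(\tau) = 1$. The point is that $\Sigma_{k-1}$ records exactly the set of attained level-$k$ profiles, and $\exists/\forall$ over $u$ depends only on which profiles are attained, not on which $u$ attains them. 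The $\forall$ case is identical with conjunction in place of disjunction.

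At $k = 0$ the claim gives $\Phi(z) = \mathrm{Ev}_0(\Sigma_0(z))$, and the argument is finished by the first paragraph.

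The main (mild) obstacle is the well-definedness of $\mathrm{Ev}_{k-1}$. The sets $\Sigma_{k-1}$ may be infinite in principle, and the maps must be defined on abstract nested sets rather than on points. I would handle this by defining $\mathrm{Ev}_k$ on the whole hierarchy $\mathcal{T}_K = \{-1,0,1\}^s$, $\mathcal{T}_{k-1} = 2^{\mathcal{T}_k}$. Every level is then finite, so the Boolean disjunctions and conjunctions are finite and unambiguous. I would also note that an empty $\Sigma_{k-1}$ cannot occur, since $\bbR^{d_k}$ is nonempty. Finally, the quantifiers range over all of $\bbR^{d_k}$, matching Definition~\ref{def:nested-block-sign-profiles}; box constraints such as $\Theta$ would have to be encoded as atoms in $\psi$.
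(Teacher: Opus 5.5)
Your proposal is correct and follows the same route as the paper. The paper also defines $E_K$ on sign vectors and $E_{k-1}(S)$ as the disjunction or conjunction over $S$ according to $q_k$, concludes $\Phi(z) = E_0(\Sigma_0(z))$, and then invokes the definition of a nested sign-invariant projection. Your version is more careful, since it makes the downward induction on the partially quantified formulas explicit and settles well-definedness on the finite hierarchy of nested sets; the paper leaves both points implicit in ``applying this argument recursively.''
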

The proof of Proposition~\ref{prop:cellwise-logical-invariance} is presented in Appendix~\ref{apx:general-framework-main}. Using this cellwise logical invariance alongside the nested block elimination process, we now state the following theorem, which establishes a rigorous pseudo-dimension upper bound for any function class whose threshold conditions can be formulated via branches of polynomial first-order logic.

\begin{theorem}[Pseudo-dimension upper-bound via branchwise nested block elimination] \label{thm:general-pdim-upper-bound-via-block-eliminiation}
    Consider a real-valued function class $\cF = \{f_\alpha: \cX \rightarrow \bbR \mid \alpha \in \cA\}$ be a real-valued function class, where $\cA = [\alpha_{\min}, \alpha_{\max}]^p \subseteq \bbR^p$. Suppose that there are integers $L, M, K \geq 1$, where $K$ is fixed, block dimensions $d_1, \dots, d_K \geq 1$, and a degree upper bound $\Delta \geq 1$ satisfying the following conditions: for any $x \in \cX$ and a real-valued threshold $\tau \in \bbR$, there exists
    \begin{enumerate}
        \item $L$ polynomial FOLs $\Phi_{x, \tau, 1}(\alpha), \dots, \Phi_{x, \tau, L}(\alpha)$, and
        \item a Boolean function $\beta_{x, \tau}: \{0, 1\}^L \rightarrow \{0, 1\}$
    \end{enumerate}
    such that $\bbI(f_\alpha(x) \geq \tau) = \beta_{x, \tau}(\Phi_{x, \tau, 1}(\alpha), \dots, \Phi_{x, \tau, L}(\alpha))$ for every $\alpha \in \cA$. Furthermore, assuming that every FOL $\Phi_{x, \tau, \ell}$:
    \begin{enumerate}
        \item has $K$ ordered quantified variables blocks of dimension $d_1, \dots, d_K$, respectively, and 
        \item contains at most $M$ distinct atomic polynomials, each of degree at most $\Delta$.
    \end{enumerate} 
     Define $A_K = \prod_{k = 1}^K(d_k + 1)$ and $B_K = \prod_{k = 1}^K d_k$, then the pseudo-dimension of the function class $\cF$ is at most 
     \[
        \Pdim(\cF) = \cO\left(p\log(2L) + pA_K \log (2M) + pB_K\log (2\Delta)\right).
     \]
\end{theorem}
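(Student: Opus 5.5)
We bound the growth function by counting connected sign cells in $\alpha$-space. Suppose $N$ instances $x_1, \dots, x_N$ are shattered with thresholds $\tau_1, \dots, \tau_N$. For each pair $(i, \ell)$ with $i \in [N]$ and $\ell \in [L]$, we consider the FOL $\Phi_{x_i, \tau_i, \ell}(\alpha)$. We let $\cP_{i,\ell} \subset \bbR[\alpha, t^{(1)}, \dots, t^{(K)}]$ denote its family of atomic polynomials; it has $|\cP_{i,\ell}| \leq M$, each element of degree at most $\Delta$.

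Applying Theorem~\ref{thm:block-elimination} to $\cP_{i,\ell}$ with $z = \alpha$ yields a nested sign-invariant projection $\cQ_{i,\ell} \subset \bbR[\alpha]$. It satisfies $|\cQ_{i,\ell}| \leq M^{A_K}\Delta^{c_K B_K}$, and every element has degree at most $\Delta^{c_K B_K}$. By Proposition~\ref{prop:cellwise-logical-invariance}, the truth value of $\Phi_{x_i,\tau_i,\ell}$ is constant on each cell of $\Cell(\cQ_{i,\ell})$.

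Now we form the union $\cQ = \bigcup_{i,\ell} \cQ_{i,\ell}$. Every connected sign cell of $\cQ$ is connected and has a fixed sign pattern with respect to each subfamily $\cQ_{i,\ell}$, so it lies inside a single cell of each $\Cell(\cQ_{i,\ell})$. Consequently, all $NL$ formulas have constant truth values on each $C \in \Cell(\cQ)$. Through the Boolean functions $\beta_{x_i,\tau_i}$, the whole threshold vector $(\bbI(f_\alpha(x_i)\geq\tau_i))_{i}$ is then constant on $C$. Restricting to the box $\cA$ only merges or deletes cells and cannot create new patterns, so we may count cells in $\bbR^p$.

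Hence the number of realized label vectors is at most $|\Cell(\cQ)|$. Here $s \triangleq |\cQ| \leq NL\, M^{A_K}\Delta^{c_K B_K}$ and the degree is $D \leq \Delta^{c_K B_K}$. Lemma~\ref{lm:cc-upper-bound} gives
\[
|\Cell(\cQ)| \leq \max\left\{\left(\tfrac{C s D}{p}\right)^p, (CD)^p\right\} \leq \left(C\, NL\, M^{A_K}\Delta^{2c_K B_K}\right)^p .
\]
Shattering requires $2^N \leq |\Cell(\cQ)|$, so
\[
N \leq p\log_2 (CN) + p\log_2 L + pA_K\log_2 M + 2c_K pB_K\log_2\Delta .
\]

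The remaining step is the standard inequality: if $N \leq a\log N + b$, then $N = \cO(a\log a + b)$. This gives $N = \cO\big(p\log p + p\log(2L) + pA_K\log(2M) + pB_K\log(2\Delta)\big)$. The constant $c_K$ is absorbed into the $\cO(\cdot)$ because $K$ is fixed, and $2L$, $2M$, $2\Delta$ replace $L$, $M$, $\Delta$ to avoid zero logarithms.

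The main obstacle I expect is removing the stray $p\log p$ term, which the claimed bound does not contain. To handle it I would keep the $1/p$ factor from Lemma~\ref{lm:cc-upper-bound} in the case $s \geq p$. This gives $N \leq p\log_2(CN/p) + (\text{rest})$, and solving $N \leq p\log(N/p) + b$ yields $N = \cO(p + b)$ with no $\log p$. The case $s<p$ gives directly $N \leq p\log_2(CD) = \cO(pB_K\log\Delta + p)$. Finally, $p \leq p\log(2L)$ absorbs the additive $p$, since $L\geq1$.

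A secondary point is checking that the union of projections still controls every formula simultaneously, which is the refinement argument above. It needs only connectedness of cells and the fact that sign patterns for $\cQ$ determine sign patterns for each subfamily.
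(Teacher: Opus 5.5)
Your proposal is correct and follows the paper's proof essentially step for step: per-branch nested block elimination, the union of the projections $\cQ_{i,\ell}$ with the cell-refinement argument, Lemma~\ref{lm:cc-upper-bound}, and solving the resulting inequality in $u = N/p$ so that the $1/p$ factor prevents a $p\log p$ term. The only difference is cosmetic: the paper adjoins the $2p$ box polynomials $\cB_\cA$ so that $|\tilde{\cQ}| \geq p$ and only the first case of the lemma is needed, whereas you split into the cases $s \geq p$ and $s < p$ directly.
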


\noindent \textit{Proof Sketch.} 
    Assume there exists $N$ problem instances $x_1, \dots, x_N$ that can be shattered by $\cF$, with real-valued thresholds $\tau_1, \dots, \tau_N$ witness the shattering. Let $S_K = M^{A_K}\Delta^{c_KB_K}$ and $\Delta_K = \Delta^{c_KB_K}$, where $c_K$ depends only on $K$ from Theorem~\ref{thm:block-elimination}. 
    
    For every instance $i = 1, \dots, N$, and a branch $\ell = 1, \dots, L$, let $\cP_{i, \ell}$ be the set of atomic polynomials appearing in $\Phi_{x_i, \tau_i, \ell}$. From Theorem~\ref{thm:block-elimination}, there exists a nested sign-invariant projection $\cQ_{i, \ell}\subset \bbR[\alpha]$ such that: (1) $|\cQ_{i, \ell}| \leq S_K$, and (2) $\max_{R \in \cQ_{i, \ell}}\deg (R) \leq D_K$. By Proposition~\ref{prop:cellwise-logical-invariance}, the truth value of $\Phi_{x_i, \tau_i, \ell}$ remains unchanged on every connected sign cell induced by $\cQ_{\ell_i}$. 
    
    Now, let $\cB_\cA$ be a family of $2p$ affine polynomials describing the region $\cA$, and let $\Tilde{\cQ} = \cB_\cA\cup \bigcup_{i = 1}^N\bigcup_{\ell = 1}^L \cQ_{i, \ell}$. By definition, we have (1) $p \leq \abs{\Tilde{\cQ}} \leq NLS_K + 2p$, and (2) $\max_{R \in \Tilde{\cQ}} \deg(R) \leq D_K$. Furthermore, from the previous observation about $\cQ_{i, \ell}$, we claim that over every connected sign cell $C \in \Cell(\tilde{\cQ})$, the threshold label 
    \[
        \bbI(f_\alpha(x) \geq \tau) = \beta_{x, \tau}(\Phi_{x, \tau, 1}(\alpha), \dots, \Phi_{x, \tau, L}(\alpha))
    \]
    remains unchanged. Therefore, we claim that
    \[
        2^N \leq |\Cell(\tilde{\cQ})| \leq \left(\frac{\kappa (NLS_K + 2p) D_K}{p}\right)^p,
    \]
    where the latter inequality comes from Lemma~\ref{lm:cc-upper-bound}. Finally, solving this inequality gives the final conclusion. See Appendix~\ref{apx:general-framework-main} for the detailed proof. 
\qed

\section{Tuning via Training Objective} \label{sec:training-loss}
In this section, we focus exclusively on the standard training-loss setting, where the training and validation objectives are identical (i.e., $f \equiv g$). Under this regime, we apply our nested block elimination framework to the underlying piecewise polynomial objective to derive a sharp pseudo-dimension upper bound in Section~\ref{sec:training-loss-upper-bound}, followed by a comprehensive lower bound analysis in Section~\ref{sec:training-loss-lower-bound} to demonstrate tightness.

\subsection{Upper Bound} \label{sec:training-loss-upper-bound}
We first bound the generalization capacity of the hyperparameter-induced loss function. By formulating the inner minimization problem as a polynomial first-order logic sentence, we can directly invoke the block elimination argument (Theorem~\ref{thm:block-elimination}) to derive the following upper bound.  

\begin{theorem}[Pseudo-dimension] \label{thm:training-loss-upper-bound}
    Let $\cA = [\alpha_{\min}, \alpha_{\max}]^p$ and $\Theta = [\theta_{\min}, \theta_{\max}]^d$ be the hyperparameter and parameter domains, respectively. Assume that for any problem instance $x \in \cX$, the training objective $f_x(\alpha, \theta)\triangleq f(x, \alpha, \theta)$, for any $(\alpha, \theta) \in \cA \times \Theta$, admits a piecewise polynomial structure with complexity $(M_f, T_f, \Delta_f)$. Then, the pseudo-dimension of the function class $\cL = \{\ell_\alpha: \cX \rightarrow \bbR \mid \alpha \in \cA\}$, where $\ell_\alpha(x) = \min_{\theta \in \Theta}f(x, \alpha, \theta)$, is at most
    \[
        \Pdim(\cL) = \cO(p\log T_f + pd\log (M_f + d) + pd \log \Delta_f). 
    \]
\end{theorem}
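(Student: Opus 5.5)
We reduce to Theorem~\ref{thm:general-pdim-upper-bound-via-block-eliminiation}: for fixed $x$ and threshold $\tau$, we write $\bbI(\ell_\alpha(x) \geq \tau)$ as a Boolean combination of polynomial FOLs in $\alpha$. Each FOL will have a \emph{single} quantified block of dimension $d_1 = d$, so that $A_1 = d+1$ and $B_1 = d$. The target bound then follows once the number of branches is $L = \mathrm{poly}(T_f)$, with at most $M = \cO(M_f + d)$ atomic polynomials of degree $\Delta = \cO(\Delta_f)$.

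\textbf{Step 1: the formula.} Since $\Theta$ is compact and $\cS(x,\alpha)\neq\emptyset$ (Assumption~\ref{asmp:structural}), the minimum is attained. Hence
\[
    \ell_\alpha(x) \geq \tau \iff \forall \theta \in \Theta:\ f_x(\alpha,\theta) \geq \tau .
\]
The natural encoding uses $\cB_\Theta$, the $2d$ affine polynomials describing $\Theta$, and the boundary polynomials $h_1,\dots,h_{M_f}$ of $f_x$. For each realizable piece pattern $\sigma \in \Sigma_f$, define the universal formula
\[
    \Phi_{x,\tau,\sigma}(\alpha) = \forall \theta\,\big[(\theta \in \Theta \wedge \mathrm{sign}(h_j(\alpha,\theta)) = \sigma_j\ \forall j) \Rightarrow f_{\sigma}(\alpha,\theta) - \tau \geq 0\big].
\]
Then $\bbI(\ell_\alpha(x)\geq\tau) = \bigwedge_{\sigma\in\Sigma_f}\Phi_{x,\tau,\sigma}(\alpha)$. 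This gives $L = T_f$ branches and a single quantifier block of dimension $d$. Each branch contains at most $M_f + 2d + 1$ atomic polynomials (the $h_j$, the box constraints, and $f_\sigma - \tau$), all of degree at most $\Delta_f$.

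\textbf{Step 2: plug in.} Theorem~\ref{thm:general-pdim-upper-bound-via-block-eliminiation} gives
\[
    \cO\big(p\log(2T_f) + p(d+1)\log(2(M_f+2d+1)) + pd\log(2\Delta_f)\big),
\]
which simplifies to $\cO(p\log T_f + pd\log(M_f+d) + pd\log\Delta_f)$, as claimed.

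\textbf{Main obstacle.} The delicate point is keeping the atom count per branch independent of $T_f$. This is why we split by piece into $T_f$ separate FOLs rather than putting all $f_\sigma - \tau$ into one formula: a single formula would carry up to $T_f$ atoms, and the $pA_K\log M$ term would then contribute $pd\log T_f$ instead of the claimed $p\log T_f$. The branch splitting is exactly what lets $\log T_f$ enter only through the $p\log(2L)$ term.

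Two small checks remain:
\begin{itemize}[leftmargin=*]
    \item Pieces $\sigma$ that are not realized for some $(\alpha,\theta)$ make the implication vacuously true, which is harmless.
    \item The equality atoms $h_j = 0$ are legitimate atomic predicates under Definition~\ref{def:first-order-formula}, so all formulas are valid polynomial FOLs.
\end{itemize}
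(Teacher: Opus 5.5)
Your proposal is correct and matches the paper's proof essentially step for step. The shared ingredients are the conjunction over $\sigma\in\Sigma_f$ of $T_f$ universal branches $\Phi_{x,\tau,\sigma}$, each with a single block of dimension $d$ and at most $M_f+2d+1$ atomic polynomials of degree at most $\Delta_f$, followed by a direct application of Theorem~\ref{thm:general-pdim-upper-bound-via-block-eliminiation}. As a minor aside, the attainment remark in your Step~1 is unnecessary, since $\inf_{\theta\in\Theta} f_x(\alpha,\theta)\geq\tau$ holds if and only if $f_x(\alpha,\theta)\geq\tau$ for all $\theta\in\Theta$, whether or not the infimum is attained.
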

\noindent \textit{Proof sketch.}
     For any problem instance $x\in\mathcal{X}$ and threshold $\tau\in\mathbb{R}$, our goal is to express the boolean indicator $\mathbb{I}(\ell_{x}(\alpha)\ge\tau)$ as a boolean combination of polynomial first-order logic (FOL) formulas to directly invoke Theorem~\ref{thm:block-elimination}. Because the induced loss is defined via inner minimization, $\ell_{x}(\alpha)=\min_{\theta\in\Theta}f_{x}(\alpha,\theta)$, the condition $\ell_{x}(\alpha)\ge\tau$ is logically equivalent to the universal statement $(\forall\theta\in\Theta)[f_{x}(\alpha,\theta)\ge\tau]$. Exploiting the piecewise polynomial structure of $f_x$, we partition the parameter space into invariant sign regions. Let $\textup{Where}_{x,\sigma}(\alpha,\theta) \triangleq \bigwedge_{m=1}^{M_{x}}[\sign(h_{x,m}(\alpha,\theta))=\sigma_{m}]$ be the predicate indicating whether $(\alpha,\theta)$ falls into the region defined by the sign pattern $\sigma \in \Sigma_x$. We can rewrite $\mathbb{I}(\ell_{x}(\alpha)\ge\tau)$ as a conjunction over all valid regions $\bigwedge_{\sigma\in\Sigma_{x}} \Phi_{x,\tau,\sigma}(\alpha)$ where
     $$
        \Phi_{x,\tau,\sigma}(\alpha) \triangleq (\forall\theta\in\Theta)[\textup{Where}_{x,\sigma}(\alpha,\theta)\Rightarrow P_{x,\sigma}(\alpha,\theta)\ge\tau].
     $$
     By substituting the domain constraint $\textup{In}_{\Theta}(\theta) = \bigwedge_{r=1}^{d}(\theta_{\max}-\theta_r\ge0)\wedge(\theta_r-\theta_{\min}\ge 0)$ and applying the logical identity $(A\Rightarrow B) \equiv (\neg A\vee B)$, we transform each branch into a standard polynomial FOL $\Phi_{x,\tau,\sigma}(\alpha) = $
     $$  
        (\forall\theta\in\mathbb{R}^{d})[\neg \textup{In}_{\Theta}(\theta)\vee\neg \textup{Where}_{x,\sigma}(\alpha,\theta)\vee P_{x,\sigma}(\alpha,\theta)\ge\tau].
    $$
     
     Crucially, each formula $\Phi_{x,\tau,\sigma}$ requires exactly $K=1$ quantifier block of dimension $d_1=d$, and contains at most $M_{f}+2d+1$ distinct atomic polynomial predicates (accounting for the active boundary pieces, domain edges, and the threshold comparison). Furthermore, the maximum degree across all atomic predicates is strictly bounded by $\Delta_{f}$. Substituting these uniform complexity parameters into our nested block elimination bound in Theorem~\ref{thm:general-pdim-upper-bound-via-block-eliminiation} yields the final pseudo-dimension. The detailed proof can be found in Appendix \ref{apx:training-loss-upper-bound}.
\qed

\begin{remark}[Comparison to prior works]
    Theorem~\ref{thm:training-loss-upper-bound} strictly improves upon the multi-dimensional generalization guarantees established in prior works in \citet[Theorem 5.1]{le2026provably}. Their approach, which relied on standard quantifier elimination, yielded a suboptimal upper bound of $Pdim(\mathcal{L}) = \mathcal{O}(pd\log(M_f + T_f + d) + p^2 d \log \Delta_f)$. By utilizing nested block elimination, we remove the inflated $p^2$ dependency on the algebraic capacity and significantly sharpen the combinatorial dependencies. Furthermore, while prior work only established a partial lower bound of $\Omega(pd \log \Delta_f)$, we resolve this gap in Section 5.2 by providing a complete theoretical lower bound that explicitly captures the combinatorial complexities $M_f$ and $T_f$. We provided more in-depth discussion in Appendix~\ref{apx:training-loss-upper-bound}. 
\end{remark}

\subsection{Lower Bound} \label{sec:training-loss-lower-bound}
As noted above, prior work only provided an algebraic lower bound of $\Omega(pd \log \Delta_f)$ in \citet[Theorem~5.2]{le2026provably}, leaving the necessity of the combinatorial parameters $T_f$ and $M_f$ an \textit{open question}. We address this gap by introducing a multi-regime lower bound framework. By independently analyzing these capacities, we prove that our upper bounds are tight with respect to the number of pieces $T_f$ and nearly tight with respect to the number of boundaries $M_f$, establishing the fundamental necessity of each structural parameter in our theoretical guarantees. The detailed proofs of Lemmas~\ref{lm:lower-bound-tf} and~\ref{lm:lower-bound-mf} can be found in Appendix \ref{apx:training-loss-lower-bound}.

\begin{lemma}[Lower bound depending on $T_f$] \label{lm:lower-bound-tf}
    For every $p, d \geq 1$ and an integer $T_f \geq 4$, there exists box like regions $\cA \subset \bbR^p$, $\Theta \subset \bbR^d$, an instance space $\cX$, and a training objective $f(x, \alpha, \theta)$ that admits piecewise polynomial structure with complexity $(\lfloor \frac{T_f - 1}{2}\rfloor, T_f, 1)$ such that the corresponding function class $\cL$ satisfies: $\Pdim(\cL) = \Omega\left(p \log T_f\right).$
\end{lemma}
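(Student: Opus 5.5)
The plan is to build an instance family in which $f$ does not depend on $\theta$ at all, so that $\ell_\alpha(x) = \min_{\theta\in\Theta} f(x,\alpha,\theta) = f(x,\alpha,\cdot)$. The piecewise structure then lives entirely in $\alpha$, and the bound reduces to a shattering argument with one-dimensional staircases, one staircase per coordinate of $\alpha$. Throughout, fix $\cA = [0,1]^p$, $\Theta = [0,1]^d$ and $m = \lfloor (T_f-1)/2 \rfloor \ge 1$ (this uses $T_f \ge 4$). Also fix breakpoints $0 < b_1 < \dots < b_m < 1$.

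\textbf{Step 1 (the piecewise structure).} For an instance $x$ with a designated coordinate $j(x)\in\{1,\dots,p\}$, take boundary polynomials $h_{x,k}(\alpha,\theta) = \alpha_{j(x)} - b_k$ for $k=1,\dots,m$. These are affine, so the degree is at most $1$.
\begin{itemize}
\item Since the $m$ hyperplanes are parallel and ordered, the only realizable sign patterns are $(1,\dots,1,0,-1,\dots,-1)$ and $(1,\dots,1,-1,\dots,-1)$. There are exactly $2m+1 \le T_f$ of them.
\item These patterns correspond to the $m+1$ open intervals $I_0,\dots,I_m$ and the $m$ breakpoints.
\item On each pattern, let the value polynomial be a constant in $\{0,1\}$, which has degree $0 \le 1$.
\end{itemize}
Hence $f_x$ has complexity $(m, T_f, 1)$ uniformly in $x$, as the lemma requires. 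The polynomials may depend on $x$, but the complexity triple does not.

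\textbf{Step 2 (shattering within one coordinate).} Let $r = \lfloor \log_2(m+1) \rfloor \ge 1$. For coordinate $j$ and bit index $i\in\{1,\dots,r\}$, define the instance $x_{j,i}$ by two choices:
\begin{itemize}
\item its designated coordinate is $j$;
\item its value on interval $I_k$ is the $i$-th binary digit of $k$ (and, say, $0$ at the breakpoints).
\end{itemize}
Use all thresholds $\tau = 1/2$. For any target labeling $(\epsilon_1,\dots,\epsilon_r)\in\{0,1\}^r$ of the $r$ instances sharing coordinate $j$, pick $k = \sum_i \epsilon_i 2^{i-1} \le 2^r - 1 \le m$. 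Then pick $\alpha_j \in I_k$. This realizes the labeling. The main care point is making sure the intervals $I_k$ lie inside $[0,1]$, which holds by the choice of breakpoints.

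\textbf{Step 3 (product over coordinates).} The instance $x_{j,i}$ depends only on $\alpha_j$. Therefore labelings for the $p$ groups can be chosen independently, by choosing each $\alpha_j$ separately in $[0,1]$. So the $pr$ instances $\{x_{j,i}\}$ are shattered, giving $\Pdim(\cL) \ge p\lfloor\log_2(m+1)\rfloor$.

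\textbf{Step 4 (the final bound).} Since $m+1 \ge (T_f-1)/2 \ge T_f/4$ and $m \ge 1$, we have $\lfloor\log_2(m+1)\rfloor \ge c\log T_f$ for an absolute constant $c>0$ whenever $T_f\ge 4$. This yields $\Omega(p\log T_f)$.

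The only subtle point I anticipate is the counting of sign patterns. The patterns must be counted globally over $\cA\times\Theta$ (giving $2m+1$), not per interval. One must also check that assigning arbitrary constants to the boundary patterns still fits Definition~\ref{def:piecewise-poly-def}. It does, since the value polynomials are indexed by patterns and any constant is allowed.
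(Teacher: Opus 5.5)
Your proposal is correct and follows essentially the same construction as the paper. The paper also uses $\theta$-independent instances $x_{j,b}$, each with $\lfloor (T_f-1)/2\rfloor$ parallel affine breakpoints on the single coordinate $\alpha_j$ and constant bit-valued pieces on the resulting $\lfloor (T_f+1)/2\rfloor$ intervals. It then binary-encodes each coordinate independently to shatter $p\lfloor \log_2 \lfloor (T_f+1)/2\rfloor \rfloor$ instances. Your differences are cosmetic: the paper works on $[0,K-1]^p$ with integer evaluation points, half-integer breakpoints, values in $\{0,\tfrac12\}$ and threshold $\tfrac14$, whereas you rescale to $[0,1]^p$ with arbitrary breakpoints, values in $\{0,1\}$ and threshold $\tfrac12$.
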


\begin{lemma}[Lower bound depending on $M_f$] \label{lm:lower-bound-mf}
    For every $p, d, M_f \geq 1$, there exists box-like hyperparameter domain $\cA \subset \bbR^p$ and parameter domain $\Theta \subseteq \bbR^d$, a finite instance space $\cX$, and a training objective $f$ such that:
    \begin{enumerate}
        \item[(i)] For any $x \in \cX$, the function $f_x(\alpha, \theta) \triangleq f(x, \alpha, \theta)$ admits a piecewise polynomial structure with exacts $M_f$ affine boundary polynomials, at most $\left(1 + \frac{2M_f}{d}\right)^d$ realized sign conditions, piece polynomials of degree at most $2$.
        \item[(ii)] The induced class $\cL = \{\ell_\alpha: \cX \rightarrow \bbR \mid \alpha \in \cA\}, \ell_\alpha(x) = \min_{\theta \in \Theta} f(x, \alpha, \theta)$ satisfies $\Pdim(\cL) = \Omega\left(pd \log\left(1 + \frac{M_f}{d}\right)\right)$.
    \end{enumerate}
\end{lemma}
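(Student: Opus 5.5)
The plan is to reduce to many independent one-dimensional shattering problems, one per hyperparameter coordinate $\alpha_i$, and make each $\alpha_i$ carry $d\log m$ bits by exploiting the $m^d$ grid cells of $\Theta$, where $m \approx 1+M_f/d$. Concretely, take $\Theta=[0,m]^d$ and boundary polynomials $h_{r,c}(\theta)=\theta_r-c$ for $r=1,\dots,d$ and $c=1,\dots,m-1$. That is $d(m-1)$ affine boundaries; adjust $m$ and pad with dummy boundaries so there are exactly $M_f$. Along each axis the realized sign patterns are the $m$ open intervals and the $m-1$ breakpoints, so at most $(2m-1)^d\le(1+2M_f/d)^d$ sign conditions are realized.

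Index the open cells by $j=(j_1,\dots,j_d)\in[m]^d$. Let $k(j)\in\{0,\dots,m^d-1\}$ be the mixed-radix index of $j$. On cell $j$ define the piece
\[
  f_{x,j}(\alpha,\theta)=\bigl(k(j)+\theta_1-(j_1-1)-\alpha_{i(x)}\bigr)^2+a_{x,j}.
\]
This has degree $2$ and is affine-in-$\theta$ inside the square. The affine map $\theta\mapsto k(j)+\theta_1-(j_1-1)$ sends the closure of cell $j$ onto the interval $J_j=[k(j),k(j)+1]$. These intervals tile $[0,m^d]$ with overlaps only at endpoints.

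For a boundary sign pattern, i.e.\ one with zeros, assign the piece of an adjacent cell, say the one with the smaller $j_r$ in each zero coordinate. Then every closed cell carries a single polynomial, so minima over $\Theta$ are attained. Taking $\cA=[0,m^d]^p$, the minimum over cell $j$ equals $\operatorname{dist}(\alpha_{i(x)},J_j)^2+a_{x,j}$. Hence
\[
  \ell_\alpha(x)=\min_j\bigl[\operatorname{dist}(\alpha_{i(x)},J_j)^2+a_{x,j}\bigr].
\]

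The instances are $x=(i,b)$ with $i\in[p]$ and $b\in[\lfloor d\log_2 m\rfloor]$. Set $a_{x,j}=0$ if the $b$-th bit of $k(j)$ is $1$, and $a_{x,j}=1$ otherwise. Fix the threshold $\tau=1/8$ and evaluate only at midpoints $\alpha_i=k+1/2$:
\begin{itemize}
  \item if bit $b$ of $k$ is $1$, then $\ell=0<\tau$;
  \item if bit $b$ of $k$ is $0$, every zero-offset interval is at distance at least $1/2$, giving $\ell\ge 1/4\ge\tau$, and the positive-offset intervals contribute at least $1$.
\end{itemize}
So for any target labeling, choose for each $i$ the integer $k_i$ whose binary digits encode the desired complement labels of the instances $(i,\cdot)$, and set $\alpha_i=k_i+1/2$. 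Coordinates do not interact because $f_x$ depends only on $\alpha_{i(x)}$. This shatters $p\lfloor d\log_2 m\rfloor=\Omega(pd\log(1+M_f/d))$ instances. The instance space is finite, and the boundary and degree counts do not depend on $x$, as required by (i).

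The main obstacle I anticipate is the bookkeeping on the boundaries. The pieces on lower-dimensional sign patterns must be polynomials so that (a) $\min_\theta$ is attained and equals the closed-cell distance formula, and (b) they do not create spurious smaller values. The adjacent-cell assignment handles this because the boundary value then coincides with a limit of an existing piece.

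A secondary point is the regime $M_f<d$, where $m=1$ and $\log m=0$. There only $M_f$ coordinates get a single cut, i.e.\ $m=2$ on those axes, which still yields $\Omega(pM_f)=\Omega(pd\log(1+M_f/d))$ bits. I would treat this case separately with the same construction restricted to those axes.
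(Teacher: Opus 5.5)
Your architecture matches the paper's. You use an axis-aligned grid of zero-free cells in $\theta$, and each $\alpha_i$ separately carries a binary code of a cell index. A quadratic selector in $\alpha_{i(x)}$ picks the cell, and the label bit sits in an additive offset. The bit count and the sign-condition count are fine.

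The step that fails is the one you flagged. After you give each zero-containing pattern the piece of the neighbour with smaller $j_r$, it is not true that ``every closed cell carries a single polynomial''. Adjacent closed cells share faces, and under your rule piece $j$ is used only on the half-open box $\prod_r(j_r-1,j_r]$ (closed at $0$). Because your pieces depend on $\theta_1$, the infimum of piece $j$ can lie on its left $\theta_1$-face, and that face carries the left neighbour's polynomial with a different offset.

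Concretely, let $j_1$ be the least significant digit of $k(j)$, with $m\ge 2$, $x=(i,0)$, and $\alpha_i=\tfrac12$ (the point you use when $k_i=0$).
\begin{itemize}
\item Cell $(1,\dots,1)$ has $k=0$, offset $1$, and values in $[1,\tfrac54]$.
\item Cell $(2,1,\dots,1)$ has $k=1$ and offset $0$. It occupies only $\theta_1\in(1,2]$, where its value $(\theta_1-\tfrac12)^2$ is strictly larger than $\tfrac14$.
\item The face $\theta_1=1$ carries the piece of $(1,\dots,1)$, with value $\tfrac54$.
\item Every other cell gives values at least $\tfrac94$.
\end{itemize}
So $\inf_\theta f_x(\alpha,\theta)=\tfrac14$ is not attained. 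Hence $\ell_\alpha(x)=\min_\theta f_x(\alpha,\theta)$ is undefined at a point your shattering uses, and the example violates the nonempty-argmin requirement of Assumption~\ref{asmp:structural}. Your formula $\ell_\alpha(x)=\min_j[\operatorname{dist}(\alpha_{i(x)},J_j)^2+a_{x,j}]$ is correct only as an infimum; your midpoint inequalities do hold for the infimum.

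The repair is to drop the $\theta$-dependence, which gains nothing here and is exactly what creates the problem; this is what the paper does. On $2^B$ selected zero-free cells $C_r$, use the piece $(\alpha_{i(x)}-r)^2+\tfrac12\,\mathrm{bit}_b(r)$. Put the constant $2$ on every other realized pattern, including all boundary patterns. Use integer hyperparameters $\alpha_i=r_i$ with threshold $\tfrac14$. Then $f_x(\alpha,\cdot)$ takes finitely many values, each on a nonempty subset of $\Theta$. The minimum therefore exists trivially and equals $\tfrac12 y_{i,b}$, with no boundary bookkeeping at all.

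The paper also avoids your dummy boundaries and your separate $M_f<d$ case. It spreads the $M_f$ cuts as evenly as possible over $\min\{M_f,d\}$ coordinates ($m_i\in\{a,a+1\}$). It then bounds the number of realized patterns by AM--GM: $\prod_i(2m_i+1)\le(1+2M_f/d)^d$. If you keep dummies, choose them so that they realize no new sign conditions, e.g.\ as scalar multiples of existing cuts.
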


\begin{remark}[Tightness of the bounds]
     Our multi-regime lower bounds, combined with prior results, demonstrate that the pseudo-dimension upper bound in Theorem 5.1 is tight in practically relevant regimes. Specifically, Lemma 5.2 yields a lower bound of $\Omega(p \log T_f)$, and \citet[Theorem 5.2]{le2026provably} establishes an algebraic lower bound of $\Omega(pd \log \Delta_f)$. These match the respective dependencies in Theorem 5.1, proving our bound is strictly tight with respect to both $T_f$ and $\Delta_f$. Furthermore, Lemma 5.3 isolates the boundary capacity to establish a lower bound of $\Omega(pd \log(1 + \frac{M_f}{d}))$, confirming that the theoretical dependency on $M_f$ is nearly tight. 
\end{remark}

\section{Tuning via Validation Objective} \label{sec:validation-loss}

We now consider the general bi-level setting in which model parameters are selected using the training objective $f$, while the hyperparameters are evaluated using a potentially different validation objective $g$. Recall that  $\ell_{\alpha}^{\text{val}}(x) = \inf_{\theta \in \mathcal{S}(x,\alpha)} g_x(\alpha,\theta)$, where $\mathcal{S}(x,\alpha) = \arg\min_{\theta \in \Theta} f_x(\alpha,\theta)$. Unlike the training-loss setting of Section~\ref{sec:training-loss}, analyzing this loss requires encoding validation performance and lower-level optimality simultaneously.

\begin{theorem} \label{thm:validation-loss-upper-bound}
    Under Assumption 1, let $\mathcal{L}_{\text{val}} = \{\ell_{\alpha}^{\text{val}} : \mathcal{X} \to [-H,H] \mid \alpha \in \mathcal{A}\}$, $\Delta_{f,g} = \max\{1, \Delta_f, \Delta_g\}$, and $M_{\textup{tot}} = M_f + M_g + T_f + d$, we have
  \[
    \operatorname{Pdim}(\mathcal{L}_{\text{val}}) = \mathcal{O}\left(p \log(T_f T_g) + p d^2 \log\left(2 + M_{\textup{tot}}\Delta_{f,g}\right)\right).
\]
\end{theorem}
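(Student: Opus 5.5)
The plan is to reduce Theorem~\ref{thm:validation-loss-upper-bound} to Theorem~\ref{thm:general-pdim-upper-bound-via-block-eliminiation}. For a fixed instance $x$ and threshold $\tau$, I will write $\bbI(\ell^{\text{val}}_\alpha(x)\ge\tau)$ as a Boolean combination of polynomial FOLs in the free variable $\alpha$. Since $\ell^{\text{val}}_\alpha(x)=\inf_{\theta\in\cS(x,\alpha)}g_x(\alpha,\theta)$, the event $\ell^{\text{val}}_\alpha(x)\ge\tau$ is equivalent to
\[
(\forall\theta\in\Theta)\bigl[\theta\in\cS(x,\alpha)\Rightarrow g_x(\alpha,\theta)\ge\tau\bigr].
\]
Optimality of $\theta$ is itself quantified: $\theta\in\cS(x,\alpha)$ iff $(\forall\theta'\in\Theta)[f_x(\alpha,\theta)\le f_x(\alpha,\theta')]$. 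The whole condition therefore reads
\[
(\forall\theta)(\exists\theta')\bigl[\neg\textup{In}_\Theta(\theta)\vee(\textup{In}_\Theta(\theta')\wedge f_x(\alpha,\theta')<f_x(\alpha,\theta))\vee g_x(\alpha,\theta)\ge\tau\bigr].
\]
This has $K=2$ alternating blocks, each of dimension $d$. Hence $A_2=(d+1)^2$ and $B_2=d^2$, which is where the $pd^2$ factor in the statement comes from.

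The next step is to remove the piecewise structure so that only polynomial atoms appear. I branch over the pieces of $g$: for each $\sigma_g\in\Sigma_g$ I form one FOL in which $\theta$ is restricted to $\textup{Where}_{g,\sigma_g}(\alpha,\theta)$ and the $g$-atom is $G_{\sigma_g}(\alpha,\theta)\ge\tau$. Then $\bbI(\ell^{\text{val}}\ge\tau)$ is the conjunction of these $T_g$ branches.

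For $f$, I do not branch externally. Doing so would require a pair of pieces for $(\theta,\theta')$, giving $T_f^2$ branches, and it interacts badly with the quantifiers. Instead I encode $f_x(\alpha,\theta')<f_x(\alpha,\theta)$ inside $\psi$ as a disjunction over pairs $(\sigma,\sigma')\in\Sigma_f^2$ of $\textup{Where}_{f,\sigma}(\alpha,\theta)\wedge\textup{Where}_{f,\sigma'}(\alpha,\theta')\wedge F_{\sigma'}(\alpha,\theta')-F_\sigma(\alpha,\theta)<0$. This blows up the number of atomic polynomials to $O(T_f^2)$. To stay within the stated budget I would rather branch over the piece $\sigma$ of $\theta$ externally, as part of the universal block's case split. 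That leaves only the existential $\theta'$ ranging over $T_f$ pieces internally, which contributes $T_f$ difference polynomials $F_{\sigma'}(\alpha,\theta')-F_\sigma(\alpha,\theta)$.

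With this choice, each branch has at most $2M_f+M_g+T_f+4d+1=O(M_{\textup{tot}})$ distinct atoms of degree at most $\Delta_{f,g}$: the boundaries of $f$ in $\theta$ and in $\theta'$, the boundaries of $g$, the domain faces for both blocks, the $T_f$ differences, and the $\tau$-atom. The number of branches is $L\le T_fT_g$. I must check that the universal-side branching is logically valid. It is, because $\forall\theta$ distributes over a conjunction of implications indexed by the region that $\theta$ falls in. Plugging into Theorem~\ref{thm:general-pdim-upper-bound-via-block-eliminiation} gives
\[
\cO\bigl(p\log(T_fT_g)+p(d+1)^2\log(2M_{\textup{tot}})+pd^2\log(2\Delta_{f,g})\bigr),
\]
which collapses to the stated $\cO\bigl(p\log(T_fT_g)+pd^2\log(2+M_{\textup{tot}}\Delta_{f,g})\bigr)$ using $(d+1)^2\le4d^2$.

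The main obstacle is the bookkeeping of the $f$-comparison between two quantified points lying in possibly different pieces. I need to keep the atom count linear in $T_f$ rather than quadratic, while not inflating $L$ beyond $T_fT_g$ and still respecting the alternation order required by Theorem~\ref{thm:block-elimination}. A secondary point is the non-emptiness of $\cS(x,\alpha)$ from Assumption~\ref{asmp:structural}. It is needed so that the infimum in the bi-level definition is over a nonempty set and the universal encoding is faithful; for empty $\cS$ the equivalence would incorrectly assign value $+\infty$.
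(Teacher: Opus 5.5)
Your proposal is correct and is essentially the paper's proof. Your branch for $(\sigma,\gamma)$, namely $(\forall\theta)(\exists\theta')[\cdots]$, is exactly the logical negation of the paper's $\Phi_{x,\tau,\sigma,\gamma}=(\exists\theta)(\forall u)[\cdots]$, so your conjunction of $T_fT_g$ branches is the paper's NOR, with the same $K=2$, $d_1=d_2=d$ block structure and the same count of $2M_f+M_g+T_f+4d+1$ atoms fed into Theorem~\ref{thm:general-pdim-upper-bound-via-block-eliminiation}; one minor correction is that non-emptiness of $\cS(x,\alpha)$ is not needed for the encoding to be faithful, since $\inf\emptyset=+\infty\ge\tau$ agrees with the vacuous universal statement, and it only serves to keep $\ell^{\text{val}}_\alpha$ bounded.
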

The detailed proof can be found in Appendix~\ref{apx:validation-loss}. 
Compared with prior multi-dimensional validation bounds in \citet[Theorem~6.1]{le2026provably}, Theorem~\ref{thm:validation-loss-upper-bound} is strictly tighter by removing one factor $p$ from the degree-dependent term and moves the $T_g$ dependence outside the $d^2$-scaled logarithm. See Appendix~\ref{apx:validation-loss} for a more detailed comparison.

\section{Applications} \label{sec:applications}
To demonstrate the versatility of our nested block elimination framework, we analyze the data-driven Weighted Group Lasso \cite{yuan2006model, le2026provably}. The training and validation objectives are formulated as: $f(x, \alpha, \theta) = \Vert{}A\theta - b\Vert{}_2^2 + \sum_{i=1}^p \alpha_i \Vert{}\theta_i\Vert{}_2$, and $g(x, \alpha, \theta) = \frac{1}{2}\|A'\theta - b'\|^2$. Here, the model parameter $\theta$ is partitioned into $p$ distinct groups $\theta = (\theta_1, \dots, \theta_p) \in \mathbb{R}^{d_1} \times \dots \times \mathbb{R}^{d_p}$ where $\sum_{i=1}^p d_i = d$, and the hyperparameter $\alpha \in \mathbb{R}^p$ controls the penalty weight for each group. Using our proposed framework, we have the following results. 

\begin{theorem} \label{thm:data-driven-weighted-group-lasso}
     Consider the class of validation loss functions $\mathcal{L}_{\text{val}} = \{\ell_{\alpha}^{\text{val}} : \mathcal{X} \to [-H,H] \mid \alpha \in \mathcal{A}\}$ induced by the Weighted Group Lasso objectives. The pseudo-dimension is bounded by $\operatorname{Pdim}(\mathcal{L}_{\text{val}}) = \mathcal{O}(p(d+p)^2 \log p)$.
\end{theorem}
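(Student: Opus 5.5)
The plan is to apply Theorem~\ref{thm:general-pdim-upper-bound-via-block-eliminiation} directly, rather than passing through the piecewise-polynomial machinery of Theorem~\ref{thm:validation-loss-upper-bound}. The reason is that the group-lasso objective is not polynomial: it contains the norms $\Vert\theta_i\Vert_2$. We remove these by introducing auxiliary variables $r_i$ with $r_i^2 = \Vert\theta_i\Vert_2^2$ and $r_i \geq 0$. After this lifting, $f$ and $g$ become polynomials of degree at most $3$ in $(\alpha,\theta,r)$; the degree-$3$ terms are $\alpha_i r_i$ together with the quadratic terms in $\theta$. The lifted inner variable is $\tilde\theta = (\theta, r) \in \bbR^{d+p}$, so the block dimensions will scale like $d+p$. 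This is where the factor $(d+p)^2$ in the bound comes from.

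Fix an instance $x$ and a threshold $\tau$. We express $\ell^{\text{val}}_\alpha(x) \geq \tau$ using optimistic bilevel semantics. It says that every lower-level minimizer has validation loss at least $\tau$:
\[
(\forall \theta, r)\Big[\neg\textup{Feas}(\theta,r) \vee \neg\textup{Opt}(\alpha,\theta,r) \vee g(x,\alpha,\theta)\geq\tau\Big].
\]
Here $\textup{Feas}$ collects the box constraints on $\theta$ together with $r_i\geq 0$ and $r_i^2=\Vert\theta_i\Vert^2$. The predicate $\textup{Opt}(\alpha,\theta,r)$ states that $(\forall \theta', r')[\neg\textup{Feas}(\theta',r') \vee \tilde f(\alpha,\theta,r)\leq \tilde f(\alpha,\theta',r')]$, where $\tilde f$ denotes the lifted training objective.

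Pulling the inner $\forall$ outward through the negation turns it into $\exists$. The result is a polynomial FOL with $K=2$ blocks, $\forall(\theta,r)\,\exists(\theta',r')$, each of dimension $d+p$. It uses $L=1$ branch, and $\beta$ is the identity. The non-emptiness of $\cS(x,\alpha)$ holds by compactness and continuity. The infimum over $\cS$ is attained because $\cS$ is compact and $g$ is continuous, so ``$\inf\geq\tau$'' is exactly the universal statement above.

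Next we count atomic polynomials. There are $O(d)$ box constraints for each copy of $\theta$, $O(p)$ constraints for each copy of $r$ (the signs and the norm identities), one comparison $\tilde f(\alpha,\theta,r)-\tilde f(\alpha,\theta',r')$, and one polynomial $g-\tau$. So $M = O(d+p)$ and $\Delta = 3$. Plugging into Theorem~\ref{thm:general-pdim-upper-bound-via-block-eliminiation} gives $A_2=(d+p+1)^2$ and $B_2=(d+p)^2$, hence
\[
\Pdim = O\big(p(d+p)^2\log(d+p) + p(d+p)^2\big).
\]

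This is $O(p(d+p)^2\log(d+p))$. The stated bound has $\log p$ instead. To reconcile them, I would use the structure of the group-lasso problem to reduce the effective inner dimension or the number of atoms. One route is to interpret the $\log$ as $\log(d+p)$ under the convention $d = O(\mathrm{poly}(p))$. Another is to shrink the atom count $M$ via the group structure, for instance by noting that only the $p$ group-activity conditions genuinely vary with $\alpha$.

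The main obstacle is this mismatch in the logarithmic factor, together with a careful justification that the two-block FOL is exactly equivalent to the threshold condition, including the degenerate case where $\theta_i=0$ and the norm is non-differentiable. The lifting via $r_i$ handles that case without needing KKT conditions. If the $\log p$ form cannot be reached, I would state the result with $\log(d+p)$ and observe that it coincides with the claim whenever $d\leq p^{O(1)}$.
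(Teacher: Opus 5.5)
Your reduction matches the paper's. You lift the group norms with auxiliary variables, encode the threshold event as one $\forall\exists$ formula with two blocks of dimension $O(d+p)$, and apply Theorem~\ref{thm:general-pdim-upper-bound-via-block-eliminiation} with $L=1$. The gap is that you do not reach the stated bound. Your count gives $O(p(d+p)^2\log(d+p))$. Since nonempty groups force $p\le d$, this is $O(pd^2\log d)$, against the claimed $O(pd^2\log p)$. The two differ by an unbounded factor when $p$ is fixed and $d\to\infty$, which is the natural few-groups, many-features regime.

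Neither of your reconciliations closes this. Assuming $d\le p^{O(1)}$ changes the statement. Noting that only the group terms involve $\alpha$ does not lower $M$: Theorem~\ref{thm:block-elimination} charges for every atomic polynomial of the quantifier-free part. That includes the $\alpha$-free box atoms $\theta_r-\theta_{\min}$, $\theta_{\max}-\theta_r$ and their primed copies. Those $4d$ atoms are exactly where the $\log(d+p)$ comes from.

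The missing step is to remove those box atoms. The paper works with the unconstrained formulation $\Theta=\mathbb{R}^d$, $\mathcal{A}\subset(0,\infty)^p$. Positive weights make $f_x(\alpha,\cdot)$ coercive, because $\sum_i\alpha_i\|\theta_i\|_2\ge(\min_i\alpha_i)\|\theta\|_2$. Hence $\mathcal{S}(x,\alpha)\neq\emptyset$ without compactness.

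Under this formulation, $\mathrm{Feas}$ reduces to $r_i\ge 0$ and $r_i^2-\|\theta_i\|_2^2=0$. Your formula then has $M\le 4p+2$ atoms of degree $\Delta=2$. (Note that $\Delta$ is not $3$: $\alpha_i r_i$ is bilinear.) Theorem~\ref{thm:general-pdim-upper-bound-via-block-eliminiation} with $A_2=(d+p+1)^2$ and $B_2=(d+p)^2$ then gives $O(p(d+p)^2\log(2p))$. This is the claim, read with $\log(2p)$ as in the paper's remark. The paper places $\nu^\theta$ in the existential block, giving block dimensions $d$ and $d+2p$, but that does not change the order.

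Dropping the box also removes your compactness-based attainment argument, but you never needed it: $\inf_{\mathcal{S}}g\ge\tau$ holds iff $g\ge\tau$ on all of $\mathcal{S}$, for any set $\mathcal{S}$.

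If you want to keep a box $\Theta$, you need an extra ingredient. One option is to show that a sufficiently large box is inactive and can be deleted from the formula. Another is to fold the $2d$ box constraints of each copy of $\theta$ into the single atom $\sum_r(\theta_r-\theta_{\min}-s_r^2)^2+(\theta_{\max}-\theta_r-t_r^2)^2=0$, adding $s,t\in\mathbb{R}^d$ to that copy's quantifier block. This keeps the block dimensions $O(d+p)$ and $M=O(p)$.
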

The proof directly leverages our nested block elimination framework in Theorem~\ref{thm:general-pdim-upper-bound-via-block-eliminiation} by introducing auxiliary variables to encode the semi-algebraic $L_2$ norm as purely polynomial constraints. Prior work utilizing standard quantifier elimination yielded a looser upper bound of $\mathcal{O}(p^3d + p^2d^2)$ in \citet[Theorem 8.1]{le2026provably}. By preserving invariant connected sign cells during block elimination, Theorem~\ref{thm:data-driven-weighted-group-lasso} removes a full factor of $p$ from the dominant algebraic term, providing a strictly sharper generalization guarantee. See Appendix~\ref{apx:omitted-proof-applications} for further discussions on the improvement as well as the detailed proof. We also provided other applications to showcase the applicability of our general frameworks, which can be found in Appendix~\ref{apx:other-applications}.

\section{Conclusion and Future Work}
    In this paper, we studied the statistical complexity of multi-dimensional data-driven hyperparameter tuning for bi-level optimization. By replacing standard quantifier elimination with a nested block elimination framework based on invariant connected sign cells, we bypassed topological over-counting to derive strictly tighter pseudo-dimension bounds. For the training-loss setting, our novel multi-regime lower bounds proved that these upper bounds are tightly saturated with respect to both combinatorial and algebraic capacities. We further extended this framework to general validation-loss tuning and to broader semi-algebraic structures, such as the Weighted Group Lasso, thereby eliminating the inflated algebraic dependencies of prior work. While our lower bounds confirm the tightness of our guarantees in the training-loss setting ($f \equiv g$), determining the fundamental minimax lower bounds for the general bi-level validation-loss setting ($f \not\equiv g$) remains a compelling open problem for future research.

\bibliography{references}

\appendix 
\onecolumn
\section{Additional Definitions and Results} \label{apx:additional-defs-and-results}

\subsection{The Goldberg-Jerrum framework} 

The Goldberg-Jerrum (GJ) framework was introduced by \citet{goldberg1993bounding} and subsequently refined by \citet{bartlett2022generalization}. It provides a systematic mechanism to bound the pseudo-dimension of parameterized function classes. Specifically, it applies to any function class $\mathcal{L}$ where the evaluation of a function $\ell_\alpha$ can be modeled as a \textit{GJ algorithm}. Such algorithms are restricted to elementary arithmetic operations ($+, -, \times, \div$) and conditional branching, producing intermediate values that naturally take the form of rational functions of the input parameter $\alpha$. We formally define this computational model below:
    
    \begin{definition}[GJ algorithm, \cite{bartlett2022generalization}]\label{def:GJ-algorithm}
        A GJ algorithm $\Gamma$ operates on real-valued inputs and is restricted to two types of operations:\begin{itemize}\item Arithmetic assignments of the form $v'' = v \odot v'$, where $\odot \in \{ +, -, \times, \div\}$, and\item Conditional branching of the form ``if  $v \geq 0 \ldots$ else $\ldots$".\end{itemize}In both instances, the operands $v$ and $v'$ must either be external inputs or intermediate values previously generated by the algorithm.
    \end{definition}
    
    Because the intermediate variables $v, v', v''$ are generated solely through sequential arithmetic operations on the input $\alpha$, they inherently manifest as rational functions. The complexity of a given GJ algorithm is entirely characterized by the algebraic properties of these intermediate rational functions, formally captured by two metrics: \textit{degree} and \textit{predicate complexity}.
    
    \begin{definition}[Complexities of GJ algorithm, \cite{bartlett2022generalization}]\label{definition:gj-algorithm-complexities}
        The \textit{degree} of a GJ algorithm is the maximum degree of any rational function it computes based on its inputs. The \textit{predicate complexity} is defined as the total number of distinct rational functions evaluated within its conditional statements. For a rational function $f(\alpha) = \frac{g(\alpha)}{h(\alpha)}$, where $g$ and $h$ are polynomials in $\alpha$, its degree is given by $\deg(f) = \max\{\deg(g), \deg(h)\}$.
    \end{definition}
    
    When the evaluation of every function in a parameterized class $\mathcal{L}$ can be executed by a GJ algorithm with bounded algebraic and predicate complexities, we can invoke the following foundational theorem to explicitly bound its pseudo-dimension.
    
    \begin{theorem}[{\citet[Theorem~3.3]{bartlett2022generalization}}] \label{thm:gj}
        Suppose that each function $\ell_{\alpha} \in \mathcal{L}$ is specified by $p$ real parameters $\alpha \in \mathbb{R}^p$. Suppose further that for every problem instance $x \in \mathcal{X}$ and real-valued threshold $t \in \mathbb{R}$, there exists a GJ algorithm $\Gamma_{x, t}$ that takes $\alpha$ as input and returns true'' if $\ell_{\alpha}(x) \geq t$ and false'' otherwise. If $\Gamma_{x, t}$ exhibits a maximum degree of $\Delta$ and a predicate complexity of $\Lambda$, then $\text{Pdim}(\mathcal{L}) = \mathcal{O}(p\log(\Delta\Lambda))$.
    \end{theorem}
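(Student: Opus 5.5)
The plan is the standard counting argument: fix a hypothetical shattered set, show that the behaviour of all the GJ algorithms on it is controlled by the sign pattern of a modest family of polynomials in $\alpha$, count the realizable sign patterns, and compare with $2^N$.

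\textbf{Setup.} Suppose $x_1,\dots,x_N$ are shattered by $\cL$ with witnessing thresholds $t_1,\dots,t_N$. For each $i$, consider the GJ algorithm $\Gamma_i \triangleq \Gamma_{x_i,t_i}$.
\begin{itemize}
    \item Every conditional in $\Gamma_i$ tests the sign of one of at most $\Lambda$ distinct rational functions $r_{i,1},\dots,r_{i,\Lambda}$ of $\alpha$.
    \item Each $r_{i,j}=g_{i,j}/h_{i,j}$ has $\deg g_{i,j},\deg h_{i,j}\le\Delta$.
\end{itemize}
The key observation is that the execution path of $\Gamma_i$, and hence its Boolean output $\bbI(\ell_\alpha(x_i)\ge t_i)$, depends on $\alpha$ only through the signs of these predicate values.

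\textbf{Step 1: the output is fixed by a polynomial sign vector.} I would argue by induction along the execution trace. Two inputs $\alpha,\alpha'$ that induce the same signs of all $r_{i,j}$ take the same branch at every conditional. Therefore they follow the same path and return the same answer. To pass from rational functions to polynomials, I replace each $r_{i,j}$ by the pair $\{g_{i,j},h_{i,j}\}$; on the set $h_{i,j}\neq0$ we have $\sign(r_{i,j})=\sign(g_{i,j})\sign(h_{i,j})$. Points where a denominator vanishes are also separated by the sign of $h_{i,j}$, so the algorithm's (convention-dependent) behaviour there is again constant on a fixed sign pattern. Let
\[
\cQ=\{g_{i,j},h_{i,j}\}_{i\le N,\,j\le\Lambda},
\]
so that $|\cQ|\le 2N\Lambda$ and every element has degree at most $\Delta$. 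Then the label vector $(\bbI(\ell_\alpha(x_i)\ge t_i))_{i=1}^N$ is constant on each sign condition of $\cQ$, and a fortiori on each cell in $\Cell(\cQ)$.

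\textbf{Step 2: count.} Shattering forces
\[
2^N\le|\Sign(\cQ)|\le|\Cell(\cQ)|.
\]
By Lemma~\ref{lm:cc-upper-bound} (or Warren's bound directly), once $2N\Lambda\ge p$ this gives
\[
2^N\le\left(\frac{2CN\Lambda\Delta}{p}\right)^p,
\]
so
\[
N\le p\log_2\!\left(\frac{2C\Lambda\Delta}{p}\right)+p\log_2 N.
\]
The standard inequality "$N\le a+p\log_2 N\Rightarrow N=\cO(a+p\log p)$", together with absorbing the $\log(N/p)$ term, yields $N=\cO(p\log(\Delta\Lambda))$. Degenerate cases with $\Delta=1$ or $\Lambda=1$ are handled by replacing $\Delta\Lambda$ with $2\Delta\Lambda$ inside the $\cO$.

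\textbf{Main obstacle.} The delicate point is Step 1. The number of predicates is counted as "distinct rational functions evaluated", but different paths may evaluate different functions. I need all of them, across every possible path, to be among the $\Lambda$ counted; this is precisely what the predicate-complexity definition guarantees. I also need divisions by zero to be handled consistently. I would address the latter by fixing any deterministic convention for division by zero and noting that whether a denominator vanishes is itself recorded by $\sign(h_{i,j})$ in $\cQ$. If the algorithm divides by a quantity never tested in a conditional, I would add that denominator to $\cQ$ as well. This at most multiplies $|\cQ|$ by a constant factor (assuming the number of arithmetic steps is polynomially related to $\Lambda$), and so does not affect the asymptotic bound.
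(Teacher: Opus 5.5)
The paper does not prove this statement; it quotes it from \citet{bartlett2022generalization}. Your main line is the standard argument from that source: at most $2N\Lambda$ numerator and denominator polynomials of degree at most $\Delta$, a zero-inclusive sign-pattern bound such as Lemma~\ref{lm:cc-upper-bound}, and then solving $2^N\le(cN\Lambda\Delta/p)^p$ as in Step~4 of the proof of Theorem~\ref{thm:general-pdim-upper-bound-via-block-eliminiation}. That part is fine, including your reading of predicate complexity as counting predicates over all paths.

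What should change is your treatment of division by zero. Two things are wrong as written:
\begin{itemize}
\item The claim that, where a denominator vanishes, the ``convention-dependent'' behaviour is constant on a sign condition of $\mathcal{Q}$ is false.
\item Adding every untested divisor to $\mathcal{Q}$ needs a bound on the number of divisions, which the hypotheses do not give.
\end{itemize}
In fact no such patch can work. Take $p=1$, the convention $c/0=0$, and $v=\alpha-c$. The step $w\leftarrow v/v$ then computes $\mathbb{I}(\alpha\neq c)$, while its rational function is the constant $1$. Running ``if $w-\tfrac12\ge 0$'' for $c=c_1,\dots,c_m$ decides $\alpha\notin\{c_1,\dots,c_m\}$ with $\Delta=\Lambda=1$, since every predicate is the constant $\tfrac12$. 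Classes of such indicators shatter arbitrarily large sets, so the theorem is false under that convention.

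The theorem therefore rests on $\Gamma_{x,t}$ being well defined. This is exactly what the hypothesis that it returns the correct answer for every $\alpha$ gives you, and with it nothing extra is needed:
\begin{itemize}
\item Along the path executed on $\alpha$, every divisor is nonzero at $\alpha$.
\item By induction, each computed value equals its rational function $g/h$ (in lowest terms) evaluated at $\alpha$, with $h(\alpha)\neq 0$. The reason is that the lowest-terms denominator of a sum, product or quotient divides the product of the operands' denominators (times the divisor's numerator, for a quotient), and all of these are nonzero at $\alpha$.
\item Hence at every conditional actually reached, $\operatorname{sign} v(\alpha)=\operatorname{sign} g(\alpha)\,\operatorname{sign} h(\alpha)$.
\end{itemize}
The case $h_{i,j}(\alpha)=0$ never arises at a conditional reached on input $\alpha$, so your $2N\Lambda$ polynomials suffice as they stand.
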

    
    In the next section, we will discuss in depth the idea in prior works, its limitation, and how we overcome this with the new block elimination argument. 
\subsection{A detailed discussion on the technical difference compared to prior works} \label{apx:detailed-technique-comparision}

\paragraph{Technique in prior works and its limitation.} In prior work, the standard approach to bounding the pseudo-dimension relies on a rigid two-step pipeline. First, the bi-level optimization objective is formulated as a polynomial First-Order Logic (FOL) statement, and a standard Quantifier Elimination (QE) algorithm is applied to convert it into a quantifier-free FOL. Second, the authors apply an off-the-shelf result—the Goldberg-Jerrum (GJ) framework—which treats this quantifier-free formula as a computational algorithm to derive the final pseudo-dimension bounds. However, the fundamental problem with this approach is that explicit QE is an exceptionally costly, worst-case algebraic procedure. To eliminate quantifiers, the algorithm forces the construction of a massive number of intermediate, redundant polynomial structures to algebraically separate every conceivable outcome. Because the off-the-shelf GJ framework blindly accounts for the quantities and degrees of all these newly generated, bloated polynomials, the resulting pseudo-dimension bounds suffer from severe algebraic inflation (such as the suboptimal $p^2$ dependency).

\paragraph{Our proposed technique, the difference, and how it resolves the limitation.}  In contrast, our proposed technique takes a more fundamental route. We recognize that the ultimate goal of bounding the pseudo-dimension does not actually require evaluating an explicit logic formula; it strictly requires bounding the total number of distinct sign patterns (the shattering coefficient) that the thresholded loss functions can realize across a dataset. Instead of algebraic manipulation, we use a geometric strategy based on nested block elimination to track how logical truth values remain invariant across topologically connected sign cells. By doing so, we completely bypass the need to ever construct the final quantifier-free FOL. Because we do not explicitly generate the redundant polynomial structures required by standard QE, we do not incur their associated complexity blowups. We simply bound the number of invariant sign regions directly, achieving the same ultimate goal—bounding the sign patterns—but doing so with drastically improved, tightly saturated pseudo-dimension bounds.

\subsection{Additional Definitions}
In this section, we will formally define the notion of \textit{cells} (or connected components).

\begin{definition}[Connectedness and connected sign cells \cite{basu2006algorithms}] \label{def:connected-component}
    Let $S \subseteq \bbR^d$ be a subset of $\bbR^d$. A subset $U \subseteq S$ is called \textit{open relative to $S$} if there exists an open set $O \subseteq \bbR^d$ such that $U = S \cap O$. The set $S$ is called \textit{connected} if there does \textit{not} exist two non-empty, disjoint subsets $U, V$, both open relative to $S$, such that $S = U \cup V$. A connected component $C$ of $S$ is a maximal connected subset of $S$, that is (i) $C$ is connected, and (ii) there is no connected set $C'$ such that $C \subsetneq C' \subseteq S$.
\end{definition}

\subsection{Supporting Lemmas}
The following inequality is a well-known and frequently used result in the learning theory literature. For completeness, we present its formal statement and detailed proof below. 
\begin{proposition} \label{prop:standard-ineq}
    For any $t \geq 0$ and $A \geq 2$, if $2^t \leq c(t + 2)A$ for a constant $c \geq 1$, then $t = \cO(\log A)$.
\end{proposition}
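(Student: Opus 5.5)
The plan is to split into the regimes $t$ small and $t$ large, and to use the fact that $2^t$ eventually dominates the linear factor $t+2$.

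First, dispose of the trivial regime. If $t \le \log_2 A$, then $t = \cO(\log A)$ directly; this holds because $A \ge 2$ gives $\log_2 A \ge 1$. So assume $t > \log_2 A \ge 1$.

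In the main regime, bound the factor $t+2$ by an exponential with a smaller rate. The elementary inequality $t + 2 \le 4\cdot 2^{t/2}$ holds for all $t \ge 0$. At $t=0$ both sides give $2\le 4$. For the general case, $g(t)=4\cdot 2^{t/2}-t-2$ has derivative $2\ln 2\cdot 2^{t/2}-1>0$ for $t\ge 0$, so $g$ is increasing. Plugging this into the hypothesis yields
\[
2^t \le 4cA\,2^{t/2}, \quad\text{i.e.}\quad 2^{t/2}\le 4cA,
\]
hence $t \le 2\log_2(4cA) = 2\log_2 A + 2\log_2(4c)$.

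Since $c$ is a constant and $\log_2 A \ge 1$, the additive term $2\log_2(4c)$ is at most $2\log_2(4c)\cdot\log_2 A$. Combining both regimes gives $t \le (2+2\log_2(4c))\log_2 A = \cO(\log A)$.

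The only step needing care is the auxiliary inequality $t+2\le 4\cdot 2^{t/2}$, which is settled by the monotonicity argument above. The case split is in fact unnecessary, because the main bound holds for all $t\ge 0$.
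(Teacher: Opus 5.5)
Your proof is correct and follows the paper's argument: bound $t+2\le 4\cdot 2^{t/2}$, deduce $2^{t/2}\le 4cA$, and conclude $t\le 2\log_2(4cA)=\cO(\log A)$ using $A\ge 2$. The differences are cosmetic: you verify the auxiliary inequality by a monotonicity argument, whereas the paper uses $x\le 2^x$ with $x=t/2$. As you note yourself, the initial case split is superfluous.
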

\proof 
    For convenience, we write $x = \frac{t}{2}$ for any $t \geq 0$. Note that $x \leq 2^x$ for all $x \geq 0$, we have
    \[
        t + 2 = 2(x + 1) \leq 2(2^x + 2^x) = 4\cdot 2^{t / 2}.
    \]
    From the assumption, we have $2^t \leq c(t + 2)A$. Combining with the above, we have
    \[
        2^t \leq 4c A2^{t/2} \Rightarrow 2^{t / 2} \leq 4Ac \Rightarrow t \leq 2\log_2 (4cA).
    \]
    Since $A \geq 2$, we have
    \[
        \log_2(4Ac) = \log_2 A+ \log_2(4c) = \cO(\log A).
    \]
    From the above, we have the final conclusion. 
\qed

\section{Additional Results and Omitted Proofs for Section \ref{sec:general-framework-main}} \label{apx:general-framework-main}

\begin{customprop}{\ref{prop:cellwise-logical-invariance} (restated)}
    Let $\Phi(z) = (q_1t^{(1)} \in \bbR^{d^{(1)}})\dots(q_Kt^{(K)} \in \bbR^{d^{(K)}}) \psi(z, t^{(1)}, \dots, t^{(K)})$ be a polynomial FOL, and let $\cP$ be the family of atomic polynomials appearing in the atomic predicates of $\psi$. If $\cQ$ is a nested sign-invariant projection of $\cP$ relative to the ordered blocks $t^{(1)}, \dots, t^{(K)}$, then $\Phi$ has a constant truth value on every connected sign cell $C \in \Cell(\cQ)$. In other words, for every $C \in \Cell(\cQ)$ and every $z_1, z_2 \in C$, we have $\Phi(z_1) \Leftrightarrow \Phi(z_2)$.
\end{customprop}

\proof
    For every sign condition $\eta = (\eta_1, \dots, \eta_s) \in \{-1, 0, 1\}^s$ Let $E_K(\eta)$ be the truth value obtained by evaluating every atomic predicate $P_i\chi_i 0$ using the polynomial sign $\eta_i$. Let $\Sigma_K, \Sigma_{K - 1}, \dots, \Sigma_{0}$ be the nested block-sign profiles of $\cP$ (as in Definition \ref{def:nested-block-sign-profiles}). Recursively, for $k = K, K - 1, \dots, 1$, and every possible value $S$ of the nested profile $\Sigma_{k - 1}$, we define
    \[
        E_{k - 1}(S) = \begin{cases}
            \bigvee_{\xi \in S}E_k(\xi), q_k = \exists,
            \\
            \bigwedge_{\xi \in S} E_k(\xi), q_k = \forall.
        \end{cases}
    \]
    Indeed, $\Sigma_{k - 1}$ records all the values of $\Sigma_k$ obtained by varying $t^{(k)}$. Therefore, an existential quantifier (e.g, $\exists$) takes the logical OR over the these values, whereas a universal quantifier (e.g., $\forall$) takes the logical AND. Applying this argument recursively from the innermost block $t^{(K)}$ to the outermost block $t^{(1)}$, the truth value of $\Phi(z)$ is $E_0(\Sigma_0(z))$. Since $\cQ$ is a nested sign invariant projection, by Definition~\ref{def:nested-sign-invariant-projection}, we have $\Sigma_0(z_1) = \Sigma_0(z_2)$, whenever $z_1, z_2$ belong to the same $C \in \Cell(\cQ)$. Therefore $E_0(\Sigma_0(z_1)) = E_0(\Sigma_0(z_2))$, meaning that the truth value of $\Phi(z_1)$ and $\Phi(z_2)$ is identical. 
\qed

\begin{customthm}{\ref{thm:general-pdim-upper-bound-via-block-eliminiation} (restated)}
    Consider a real-valued function class $\cF = \{f_\alpha: \cX \rightarrow \bbR \mid \alpha \in \cA\}$ be a real-valued function class, where $\cA = [\alpha_{\min}, \alpha_{\max}]^p \subseteq \bbR^p$. Suppose that there are integers $L, M, K \geq 1$, where $K$ is fixed, block dimensions $d_1, \dots, d_K \geq 1$, and a degree upper bound $\Delta \geq 1$ satisfying the following conditions: for any $x \in \cX$ and and real-valued threshold $\tau \in \bbR$, there exists
    \begin{enumerate}
        \item $L$ polynomial FOLs $\Phi_{x, \tau, 1}(\alpha), \dots, \Phi_{x, \tau, L}(\alpha)$, and
        \item a Boolean function $\beta_{x, \tau}: \{0, 1\}^L \rightarrow \{0, 1\}$
    \end{enumerate}
    such that $\bbI(f_\alpha(x) \geq \tau) = \beta_{x, \tau}(\Phi_{x, \tau, 1}(\alpha), \dots, \Phi_{x, \tau, L}(\alpha))$ for every $\alpha \in \cA$. Furthermore, assuming that every FOL $\Phi_{x, \tau, \ell}$:
    \begin{enumerate}
        \item has $K$ ordered quantified variables blocks of dimension $d_1, \dots, d_K$, respectively, and 
        \item contains at most $M$ distinct atomic polynomials, each of degree at most $\Delta$.
    \end{enumerate} 
     Define $A_K = \prod_{k = 1}^K(d_k + 1)$ and $B_K = \prod_{k = 1}^K d_k$, then the pseudo-dimension of the function class $\cF$ is at most 
     \[
        \Pdim(\cF) = \cO\left(p\log(2L) + pA_K \log (2M) + pB_K\log (2\Delta)\right).
     \]
\end{customthm}

\proof
    By the definition of pseudo-dimension $\Pdim(\cF)$, our goal is to bound the maximum size $N$ of a set of problem instances $S = \{x_1, \dots, x_N\}$ that $\cF$ can shattered and with real-valued thresholds $\tau = (\tau_1, \dots, \tau_N) \subset \bbR$ witness the shattering, that is, the binary vectors
    \[
        v((S, \tau), f_\alpha) = (\bbI(f_\alpha(x_1)\geq \tau_1), \dots, \bbI(f_\alpha(x_N)\geq \tau_N))
    \]
    admits all possible sign vectors $\{0, 1\}^N$ when varying $\alpha \in \cA$. This can be achieved by giving an upper bound for the \textit{shattering coefficient} $\cN(\cF, N)$ of the function class $\cF$ with respect to $N$, representing the maximum number of possible sign patterns $\{v(S, f_\alpha) \mid \alpha \in \cA\}$ for any $S \in \cX^N$ and $\tau \in \bbR^N$, and then solving for the maximum $N$ that satisfies the inequality $2^N \leq \cN(\cF, N)$. We will proceed in the following steps.

    \textbf{Step 1: Branch elimination.} For every $i = 1, \dots, N$ and $\ell = 1, \dots, L$, let $\cP_{i, \ell}$ be the set of distinct atomic polynomials appearing in $\Phi_{x_i, \tau_i, \ell}(\alpha)$. From assumption, we have: (1) $|\cP_{i, \ell}| \leq M$, and (2) $\max_{R \in \cP_{i, \ell}} \deg(R) \leq \Delta$. Let $c_K$ be the constant appearing in Theorem~\ref{thm:block-elimination}, and define $S_K = M^{A_K}\Delta^{c_KB_K}$ and $D_K = \Delta^{c_KB_K}$. By Theorem~\ref{thm:block-elimination}, for every pair $(i, \ell)$, there exists a nested sign-invariant projection $\cQ_{i, \ell} \subseteq \bbR[\alpha]$ (Definition~\ref{def:nested-sign-invariant-projection}) of $\cP_{i, \ell}$ such that: (1) $|\cQ_{i, \ell}| \leq S_K$, and (2) $\max_{R \in \cQ_{i, \ell}} \deg(R) \leq D_K$. By Proposition~\ref{prop:cellwise-logical-invariance}, the truth value of $\Phi_{x_i, \tau_i, \ell}(\alpha)$ is constant on every connected sign cell $C \in \Cell_{\cQ_{i, \ell}}$.

    \textbf{Step 2: Construct one global polynomial family}: Let $\cB_\cA = \{\alpha_j - \alpha_{\min}, \alpha_{\max} - \alpha_j \mid j = 1, \dots, p\}$ be the family of $2p$ affine polynomials defining the box-like region $\cA = [\alpha_{\min}, \alpha_{\max}]^p$. We then define the global polynomial set 
    \[
        \Tilde{\cQ} = \cB_\cA \cup \bigcup_{i = 1}^N\bigcup_{\ell = 1}^L \cQ_{i, \ell}.
    \]
    Let $q = \abs{\tilde{\cQ}}$. Because there are $NL$ branchwise projection families $\cQ_{i, \ell}$, each contains at most $S_K$ polynomials, we have $q \leq NLS_K + 2p$.Besides, every polynomial in $\tilde{\cQ}$ has degree at most $D_K$.

    \textbf{Step 3: The labeling vector is constant on every global cell}. Consider any connected sign cell $C \in \Cell(\tilde{\cQ})$. Fix an instance $x_i$ and a branch $\ell$. Since $\cQ_{i, \ell} \subseteq \tilde{\cQ}$, the sign of all polynomials in $\cQ_{i, \ell}$ are constant on $C$. Therefore, there exists a sign condition $\sigma_{i, \ell}$ on $\cQ_{i, \ell}$ such that $C \subseteq \Reali_{\cQ_{i, \ell}}(\sigma_{i, \ell)}$, and therefore in one cell belonging to $\Cell_{\cQ_{i, \ell}}$. Proposition~\ref{prop:cellwise-logical-invariance} now implies that the truth value of $\Phi_{x_i, \tau_i, \ell}(\alpha)$ is unchanged as $\alpha$ varies over $C$. 

    Note that the above holds for every $\ell = 1, \dots, L$. Therefore, the binary vector $(\Phi_{x_i, \tau_i, 1}(\alpha), \dots, \Phi_{x_N, \tau_N, 1}(\alpha))$ remains unchanged on $C$. Because $\beta_{x_i, \tau_i}$ is a fixed boolean function, the truth value of 
    \[
        \bbI(f_\alpha(x_i)\geq \tau_i) = \beta_{x_i, \tau_i}(\Phi_{x_i, \tau_i, 1}(\alpha), \dots, \Phi_{x_i, \tau_i, L}(\alpha))
    \]
    is also unchanged for $\alpha \in C$.

    Applying this argument to every $i = 1, \dots, N$, we claim that 
    \[
        v((S, \tau), f_\alpha) = (\bbI(f_\alpha(x_1)\geq \tau_1), \dots, \bbI(f_\alpha(x_N)\geq \tau_N))
    \]
    also remains unchanged for $\alpha \in C$. Therefore, the task now is to bound $|\Cell(\tilde{\cQ}|$ and solve for the maximum $N$ that satisfies $2^N \leq |\Cell(\tilde{\cQ}|$.

    \textbf{Step 4: Establish the pseudo-dimension upper bound.} The set of polynomials $\tilde{\cQ}$ contains $q \geq p$ polynomials in $p$ variables, each of degree at most $\Delta_K$. From Lemma~\ref{lm:cc-upper-bound}, there exists a universal constant $\kappa > 0$, such that $|\Cell(\tilde{\cQ}| \leq \left(\frac{\kappa q D_K}{p}\right)^p$. Note that $q \leq NLS_K + 2p$, our task now is to solve the inequality
    \begin{equation*}
        \begin{aligned}
            &2^N \leq  \left(\frac{\kappa (NLS_K + 2p) D_K}{p}\right)^p \\
            \Leftrightarrow &2^{N/p} \leq \left(\frac{\kappa (NLS_K + 2p) D_K}{p}\right).
        \end{aligned}
    \end{equation*}
    For convenience, let $u = \frac{N}{p}$, and the above becomes
    \begin{equation*}
        2^u \leq \kappa D_K(uLS_K + 2) \leq \kappa (u + 2)LS_KD_K.
    \end{equation*}
    Applying Proposition~\ref{prop:standard-ineq}, we have $u = \cO(\log(2LS_KD_K)$, or $N = \cO(p\log(2LS_KD_K)).$ Finally, substitute $S_K = M^{A_K}\Delta^{c_KB_K}$ and $D_K = \Delta^{c_KB_K}$ onto that, we have the final conclusion.
\qed

\section{Additional Results and Omitted Proofs for Section \ref{sec:training-loss}}

\subsection{Omitted Proofs for Section \ref{sec:training-loss-upper-bound}} \label{apx:training-loss-upper-bound}
In this section, we will present the detailed proof for Theorem~\ref{thm:training-loss-upper-bound}, which establishes the guarantee for data-driven hyperparameter tuning. We then provided a detailed discussion on the improvement of our results, compared to prior works as below. 
\begin{customthm}{\ref{thm:training-loss-upper-bound} (restated)}
    Let $\cA = [\alpha_{\min}, \alpha_{\max}]^p$ and $\Theta = [\theta_{\min}, \theta_{\max}]^d$ be the hyperparameter and parameter domains, respectively. Assume that for any problem instance $x \in \cX$, the training objective $f_x(\alpha, \theta)\triangleq f(x, \alpha, \theta)$, for any $(\alpha, \theta) \in \cA \times \Theta$, admits a piecewise polynomial structure with complexity $(M_f, T_f, \Delta_f)$. Then, the pseudo-dimension of the function class $\cL = \{\ell_\alpha: \cX \rightarrow \bbR \mid \alpha \in \cA\}$, where $\ell_\alpha(x) = \min_{\theta \in \Theta}f(x, \alpha, \theta)$, is at most
    \[
        \Pdim(\cL) = \Omega(p\log T_f + pd\log (M_f + d) + pd \log \Delta_f). 
    \]
\end{customthm}
\proof
    Consider any problem instance $x \in \cX$ and real-valued threshold $\tau \in \bbR$. Our goal is to show that: the binary value $\bbI(\ell_x(\alpha) \geq \tau)$, where $\ell_x(\alpha) \triangleq \ell_\alpha(x) = \min_{\theta \in \Theta}(x, \alpha, \theta)$, as a function of $\alpha$ can be rewritten as a boolean combination of at most $T_f$ polynomial FOLs $\Phi_{x, \tau, l}$, each satisfying the complexity conditions as in Theorem~\ref{thm:general-pdim-upper-bound-via-block-eliminiation}. This can be done as follows.

    \textbf{Step 1: Piecewise polynomial representation of $f_x(\alpha, \theta) \triangleq f(x, \alpha, \theta)$.} By assumption, for each problem instance $x$, there exists: (1) a set of boundary polynomials $\cH_x = \{h_{x, 1}, \dots, h_{x, M_x}\} \subset \bbR[\alpha, \theta]$ where $M_x \leq M_f$, (2) a set of realized sign patterns $\Sigma_x = \{-1, 0, 1\}^{m_x}$, where $|\Sigma_x| \leq T_f$, and (3) a set of piece polynomials $\{P_{x, \sigma}\}_{\sigma \in \Sigma_x} \subset \bbR[\alpha, \theta]$ such that every boundary polynomials $h_{x, m}$ (for $m = 1, \dots, M_x$) and piece polynomials $P_{x,\sigma}$ (for $\sigma \in \Sigma_x$) has degree at most $\Delta_f$. Those piece and boundary polynomials determine the form of $f_x(\alpha, \theta)$ as follows.

    For a sign pattern $\sigma = (\sigma_1, \dots, \sigma_{M_x}) \in \Sigma_x$, we define the region predicate 
    \[
        \textup{Where}_{x, \sigma}(\alpha, \theta) \triangleq \bigwedge_{m = 1}^{M_x}[\textup{sign}(h_{x, m}(\alpha, \theta) = \sigma_m].
    \]
    For each $(\alpha, \theta) \in \cA \times \Theta$, exact one sign pattern $\sigma \in \Sigma_x$ is active, and on its corresponding region, $f_x(\alpha, \theta)$ admits the polynomial forms $f_x(\alpha, \theta) = P_{x, \sigma}(\alpha, \theta)$. 

    \textbf{Step 2: Rewrite $\ell_x(\alpha)$ in the polynomial FOL form.} Recall that
    \[
        \ell_x(\alpha) = \min_{\theta \in \Theta} f_x(\alpha, \theta). 
    \]
    Therefore $\ell_x(\alpha) \geq \tau$ for a real-valued threshold $\tau$ if and only if for all $\theta \in \Theta$, we have $f_x(\alpha, \theta) \geq \tau$. Combining with Step 2, we can write the binary value $\bbI(\ell_x(\alpha) \geq \tau)$ as
    \begin{equation*}
        \begin{aligned}
            \bigwedge_{\sigma \in \Sigma_x} (\forall \theta \in \Theta)\left[\textup{Where}_{x, \sigma}(\alpha, \theta) \Rightarrow P_{x, \sigma}(\alpha, \theta) \geq \tau \right]
            = \bigwedge_{\sigma \in \Sigma_x} \Phi_{x, \tau, \sigma}(\alpha),
        \end{aligned}
    \end{equation*}
    where $\textup{Where}_{x, \sigma}(\alpha, \theta)$ is a logical statement defined as in Step 2, and
    \begin{equation*}
        \begin{aligned}
            &\Phi_{x, \tau, \sigma}(\alpha) = (\forall \theta \in \Theta) \left[\textup{Where}_{x, \sigma}(\alpha, \theta) \Rightarrow P_{x, \sigma}(\alpha, \sigma) \geq \tau \right] \\
            =&(\forall \theta \in \bbR^d)[\neg \textup{In}_{\Theta}(\theta) \lor \neg \textup{Where}_{x, \sigma}(\alpha, \theta) \lor P_{x, \sigma}(\alpha, \sigma) \geq \tau].
        \end{aligned}
    \end{equation*}
    Here, $\textup{In}_\Theta(\theta) = \bigwedge_{r = 1}^d[\theta_r \geq \theta_{\min}] \land [\Theta_{\max} \geq \theta_r]$ denotes the binary indicator if $\theta \in \Theta = [\theta_{\min}, \theta_{\max}]^d$, and in the above we use the logical identity that $(A \Rightarrow B) = (\neg A \lor B)$. We now see that $\bbI(\ell_{x}(\alpha) \geq \tau)$ is written in the form $\beta_{x, \tau} (\{\Phi_{x, \tau, 1}\}_{\sigma \in \Sigma})$, where $\beta_{x, \tau}(\{b_\sigma\}_{\sigma \in \Sigma_x}) = \bigwedge_{\sigma \in \Sigma_x}b_{\sigma}$, as required by Theorem~\ref{thm:general-pdim-upper-bound-via-block-eliminiation}.
    
    \textbf{Step 3: Count the complexity of each brand and apply Theorem~\ref{thm:general-pdim-upper-bound-via-block-eliminiation}.} The task left is to bound the complexity of $\Phi_{x, \tau, \sigma}$. Since each polynomial FOL contains exactly one quantified block $\forall \theta \in \bbR^d$, we have $K = 1$ and $d_1 = d$.

    Note that we have: (1) $M_x \leq M_f$ boundary polynomials $h_{x, 1}, \dots, h_{x, M_x}$, (2) $2d$ domain polynomials $\theta_r - \theta_{\min}$ and $\theta_{\max} - \theta_r$ for $r = 1, \dots, d$, and (3) a single value polynomial $P_{x, \sigma}(\alpha, \theta) - \tau$. Therefore, each brand contains at most $M_f + 2d + 1$ distinct atomic polynomial predicates. Finally, the maximum degree of all atomic polynomial predicates is simply at most $\Delta_f$. Finally, applying Theorem~\ref{thm:general-pdim-upper-bound-via-block-eliminiation}, we obtain
    \[
        \Pdim(\cL) = \cO(p \log T_f + pd\log (M_f + d) + pd \log \Delta_f
    \]
    as desired.
\qed

\begin{remark}
    Compared to prior work \citep{le2026provably}, whose standard quantifier-argument gives the upper bound
    \[
        \cO(pd \log(M_f + T_f + d) + p^2 d \log \Delta_f),
    \]
    our proposed result (Theorem~\ref{thm:training-loss-upper-bound}) instead gives
    \[
        \cO(p \log T_f + pd \log(M_f + d) + pd \log\Delta_f).
    \]
    Therefore, nested block elimination removes on factor $p$ from the degree-dependent term and decouples the number of pieces $T_f$ from the $d$-scaled logarithm. These improvements are supported by complementary lower bounds, which we will present in details below. 
\end{remark}

\subsection{Proofs for Section \ref{sec:training-loss-lower-bound}} \label{apx:training-loss-lower-bound}

In this section, we will formally present the proofs of Lemmas~\ref{lm:lower-bound-tf} and \ref{lm:lower-bound-mf}, which establish the lower-bound depending on the number of pieces $T_f$ and the number of boundary $M_f$. 


\begin{proof}[Proof of Lemma~\ref{lm:lower-bound-tf}]
    Our goal is to show that there exists $N = \Omega(p \log T_f)$ problem instances $x_1, \dots, x_N$ and real-valued threshold $\tau_1, \dots, \tau_N$ such that $\cL$ can shatter all the problem instances $x_i$ and $\tau_i$ (for $i = 1, \dots, N$) witnesses the shattering, that is
    \[
        |\{(\bbI(\ell_\alpha(x_1) \geq \tau_1), \dots, (\bbI(\ell_\alpha(x_N) \geq \tau_N) \mid \alpha \in \cA\}| = 2^N.
    \]

    Let $K = \lfloor \frac{T_f + 1}{2}\rfloor$, $B = \lfloor \log_2 K\rfloor$, $\cA = [0, K - 1]^p$, and $\Theta = [0, 1]^d \subset \bbR^d$ be a box-like region in $\bbR^d$. We construct $pB = \Omega(p \log T_f)$ problem instances $x_{j, b}$, for $j = 0, \dots, p - 1$ and $b = 0, \dots, B - 1$ by first defining the boundary polynomials for each problem instance: for each $x_{j, b}$, introduce $K - 1$ affine boundary polynomials:
    \[
        h_{j, q}(\alpha, \theta) = \alpha_j - \left(q + \frac{1}{2}\right), q = 0, \dots, K - 2.
    \]
    In other words, each boundary polynomial $h_{j, q}(\alpha, \theta)$ is just an affine function that does \textit{not} depend on the parameter $\theta$. The boundaries divide the hyperparameter $\alpha_j$ axis into $K$ open intervals $(-\infty, \frac{1}{2}), (\frac{1}{2}, \frac{3}{2}), \dots, (K - \frac{5}{2}, K - \frac{3}{2}), (K - \frac{3}{2}, +\infty)$ and $K - 1$ boundary points $\alpha_j = \frac{1}{2}, \frac{3}{2}, \dots, K - \frac{3}{2}$.

    The task now is to construct the piece function for each problem instance $x_{j, b}$. For convenience, we define the open interval containing $r \in \{0, \dots,  K - 1\}$ as $I_r$, and the sign condition realized in the interval $I_r$ as $\sigma_r$. Let $\textup{bit}_b(r)$ denote the $b^{th}$-binary bit of $r$. On the interval $I_r$, define
    \[
        P_{x_{j, b}, \sigma_r}(\alpha, \theta) = \frac{1}{2}\textup{bit}_b(r),
    \]
    which is simply a constant piece function. For any $\alpha_j$ that lies in one of $K - 1$ boundary points, we simply assign the value 1 to the corresponding piece functions. By this construction, the piece and boundary functions are independent of $\theta$, so minimizing over $\Theta$ changes nothing.

    Now, given an arbitrary sign pattern $y = (y_{j, b})_{j, b} \in \{0, 1\}^{pB}$, we simply choose $\alpha = (\alpha_0, \dots, \alpha_{p - 1})$ such that
    \[
        \alpha_j = \sum_{b = 0}^{B - 1} y_{j, b}2^{b}.
    \]
    From the construction, we claim that the function $f_{x_{j, b}}(\alpha, \theta)$ admits piecewise polynomial structure with complexity $(\lfloor \frac{T_f - 1}{2}\rfloor, T_f, 1)$ as expected. To see this, we have $K - 1 = \lfloor \frac{T_f - 1}{2}\rfloor$ boundary polynomials of the forms $ h_{j, q}(\alpha, \theta) = \alpha_j - \left(q + \frac{1}{2}\right)$ for $q = 0, \dots, K - 2$. Those boundaries induce $K$ open intervals and $K - 1$ boundary points, for a total of $2K - 1$ pieces. Therefore, the number of piece functions is at most
    \[
        |\Sigma_x| \leq 2 K - 1 = 2 \left\lfloor \frac{T_f + 1}{2} \right\rfloor - 1 \leq T_f.
    \]
    Finally, notice that all the piece and boundary functions are linear or constant functions with respect to $\alpha$ and $\theta$, so $\Delta_f = 1$.
    
    Moreover, we have the following claims:
    \begin{itemize}
        \item $0 \leq \alpha_j \leq 2^B - 1 \leq K - 1$ for any $j = 0, \dots, p - 1$. This means that $\alpha \in \cA = [0, K - 1]^p$.
        \item The open interval that contains $\alpha_j$ is $I_{\alpha_j}$, which assigns the sign condition $\sigma_{\alpha_j}$, and therefore
        \item the value of $\ell_{\alpha}(x_{j, b}) = P_{x_{j, b}, \sigma_{\alpha_j}}(\alpha, \theta) = \frac{1}{2}\textup{bit}_b(\alpha_j) = \frac{1}{2} y_{j, b}$.
    \end{itemize}
    Finally, choose the real-valued thresholds $\tau_{j, b} = \frac{1}{4}$ for all $j = 0, \dots, p - 1$ and $b = 0, \dots, B - 1$, we have
    \[
        \bbI(\ell_{\alpha}(x_{j, b}) \geq \tau_{j, b}) = y_{j, b}
    \]
    Therefore, we proved that for any sign pattern $y \in \{0, 1\}^{pB}$, there exists $\alpha$ such that 
    \[
        \bbI(\ell_\alpha(x_1) \geq \tau_1), \dots, (\bbI(\ell_\alpha(x_N) \geq \tau_N) = y,
    \]
    meaning that $\cL$ can shatter the set of problem instances $x_{j, b}$ (for $j = 0, \dots, p - 1$ and $b = 0, \dots, B - 1$) with $\tau_{j, b} = \frac{1}{4}$ witness the shattering. Since $pB = \Omega(p \log T_f)$, we claim that $\Pdim(\cL) = \Omega(p \log T_f)$
\end{proof}


\begin{proof}[Proof of Lemma~\ref{lm:lower-bound-mf}]
    Again, our goal is to construct $N = \Omega\left(pd \log \left(1 + \frac{M}{d}\right)\right)$ problem instances $x_1, \dots, x_N$ and real-valued thresholds $\tau_1, \dots, \tau_N$ such that 
    \[
        |\{(\bbI(\ell_\alpha(x_1) \geq \tau_1), \dots, (\bbI(\ell_\alpha(x_N) \geq \tau_N) \mid \alpha \in \cA\}| = 2^N.
    \]

    Let $s \triangleq \min \{M, d\}$, $a \triangleq \left\lfloor \frac{M}{s} \right\rfloor$, $\rho \triangleq M - as \in \{0, \dots, s - 1\}$. We then define
    \[
        m_i \triangleq \begin{cases}
            a + 1, & 1 \leq i \leq \rho, \\
            a, &\rho < i \leq s,\\
            0 & s < i \leq d.
        \end{cases}
    \]
    Then $\sum_{i = 1}^d m_i = M$. We then define
    \[
        R_{M, d} \triangleq \prod_{i = 1}^d (m_i + 1), \quad S_{m, d}\triangleq \prod_{i = 1}^d (2m_i + 1).
    \]
    Roughly speaking, $R_{M, d}$ is the number of sign regions in which none of the boundary polynomials vanishes, while $S_{M, d}$ is the total number of realized sign patterns, including patterns with boundary equalities. 

    \textbf{Step 1: Defining the \textit{shared} boundary functions.} For every coordinate $i = 1, \dots, d$ and every $q = 0, \dots, m_i - 1$, we define the boundary functions $h_{i, q}(\alpha, \theta)$ for all problem instances as 
    \[
        h_{i, q}(\alpha, \theta) \triangleq \theta_i - \left(q + \frac{1}{2}\right).
    \]
    By construction, there are exactly $\sum_{i = 1}^d m_i = M_f$ such boundary affine polynomials, and do \textit{not} depend on $\alpha$. We define the parameter box-like region as $\Theta \triangleq \prod_{i = 1}^d [-1, m_i + 1]$. For a fixed coordinate $i$, the ordered thresholds $\frac{1}{2}, \frac{3}{2}, \dots, m_i - \frac{1}{2}$ can produce at most $m_i + 1$ non-zero sign patterns, and $m_i$ zero-containing sign patterns, one for each threshold equality. Therefore, the one-dimensional family $\{h_{i, q}\}_{q = 0}^{m_i - 1}$ realizes exactly $2 m_i + 1$ sign patterns on the $i^{th}$ coordinate  $\theta_i$ of $\theta \in \Theta$.

    Note that the ordinates $\theta_i$ are independent: any choice of one realizable coordinate-wise sign pattern is obtained by choosing the coordinate $\theta_i$ separately. Therefore, the full boundary family $\{h_{i, q}(\alpha, \theta)\}_{i, q}$ realize exactly $S_{M, d} = \prod_{i = 1}^d (2m_i + 1)$ sign pattern when varying $\theta \in \Theta$.

    Moreover, among these patterns, there are exactly $m_i + 1$ coordinate-wise choices that contain no zero for coordinate $i$. Therefore, the full product regions containing no boundary equality are $R_{M, d} = \prod_{i = 1}^d (m_i + 1)$. Each such region corresponds to a distinct integer vector $u = (u_1, \dots, u_d)$, where $u_i \in \{0, \dots, m_i\}$. Thus, every one of these $R_{M,d}$ regions is non-empty and belongs to $\Theta$.

    \textbf{Step 2: Define the problem instances via the piecewise functions.} For convenience, define $B \triangleq \lfloor \log_2 R_{M, d}\rfloor$ and $K \triangleq 2^B$. Because $K \leq R_{M, d}$, we may select $K$ distinct zero-free grid regions and enumerate the as $C_0, \dots, C_{K - 1}$. Let $\sigma_r$ be the boundary sign pattern corresponding to $C_r$, and let $\cA = [0, K - 1]^p$ be the box-like hyperparameter regions. We then define a finite problem instance sets $\cX:= \{x_{j, b}: j \in \{1, \dots, p\}, b \in \{0, \dots, B - 1\}\}$, and therefore $\abs{cX} = pB$. For $r \in \{0, \dots, K - 1\}$, let $\textup{bit}_b(r) \in \{0, 1\}$ denote the $b^{th}$ binary bit of $r$.

    We formally construct the problem instance $x_{j, b}$ as follows. On the selected sign pattern $\sigma_r$, we define the piece function $P_{x_{j, b}, \sigma_r}(\alpha, \theta)$ as
    \[
        P_{x_{j, b}, \sigma_r}(\alpha, \theta) \triangleq (\alpha_j - r)^2 + \frac{1}{2}\textup{bit}_b(r), \quad r = 0, \dots, K - 1.
    \]
    For all other realized sign patterns $\sigma$, including all boundary patterns and all unselected zero-free patterns, we simply define their corresponding piece functions as
    \[
        P_{x_{j, b}, \sigma}(\alpha, \theta) \triangleq 2.
    \]

    By construction, each problem instances $x_{j, b}$ have the function  $f_{j, b}$ that admits \textit{piecewise polynomial structure} with complexity $(M, S_{M, d}, 2)$.

    \textbf{Step 3: Establish the pseudo-dimension lower-bound}. We now show that the constructed problem instances $x_{j, b}$ can be shattered by $\cL$, and there are appropriate real-valued thresholds $\tau_{j, b}$ that witness the shattering.

    Consider an arbitrary labeling $y = (y_{j, b})_{j, b} \in \{0, 1\}^{pB}$. For any $j = 1, \dots, p$, we define
    \[
        r_j \triangleq \sum_{b = 0}^{B - 1} y_{j, b}2^b \in \{0, \dots, 2^B - 1\} = \{0, \dots, K - 1\},
    \]
    and choose the hyperparameter $\alpha = (\alpha_1, \dots, \alpha_p)$ such as $\alpha_j \triangleq r_j$. Then $\alpha \in \cA$.

    Consider a problem instance $x_{j, b}$. Since $C_{r_j}$ is not empty, there exists $\theta \in C_{r_j} \cap \Theta$, and on this region, we have
    \[
        P_{x_{j, b}, \sigma_{r_j}}(\alpha, \theta) = (r_j - r_j)^2 + \frac{1}{2}\textup{bit}_{b}(r_j) = \frac{1}{2}y_{j, b}.
    \]
    For any other selected region where $s \neq r_j$, we have
    \[
        P_{x_{j, b}, \sigma_s}(\alpha, \theta) = (r_j - s)^2 + \frac{1}{2}\textup{bit}_{b}(s) \geq 1.
    \]
    The inequality comes from the fact that $r_j, s$ are integer, meaning $(r_j - s)^2 \geq 1$ if $r_j \neq s$. This implies that 
    \[
        \ell_\alpha(x_{j,b}) = \min_{\theta \in \Theta}f(x_{j, b}, \alpha, \theta) = \frac{1}{2}y_{j, b}.
    \]
    Finally, choose $\tau_{j, b} = \frac{1}{4}$ for all $j \in \{1, \dots, p\}$ and $b \in \{0, \dots, B - 1\}$, we have 
    \[
        \bbI\left(\ell_\alpha(x_{j, b}) \geq \tau_{j, b}\right) = y_{j, b}.
    \]
    Therefore, $\cL$ can shatter the set $\{x_{j, b}\}_{j,b}$ of $pB$ problem instances with $\{\tau_{j, b}\}_{j,b}$ witnesses the shattering. Therefore, we claim that $\Pdim(\cL) \geq pB = p \left\lfloor \log_2 R_{M, d}\right\rfloor$.

    the task now is to show that $\log R_{M, d} = \Omega\left(d \log \left(1 + \frac{M}{d}\right)\right)$. We consider two cases:
    \begin{itemize}
        \item If $M \leq d$, then $R_{M, d} = 2^M$. Since $\log(1 + t) \leq t$ for $t \geq 0$, we have $d\log (1 + \frac{M}{d}) \leq M$, whereas $\log M_{M,d} = M \log 2$. This means $\log R_{M, d} = \Omega\left(d \log \left(1 + \frac{M}{d}\right)\right)$ holds true.

        \item If $M > d$, then $s = d$, and $a = \lfloor M/d\rfloor \geq 1$. By definition, we have $R_{M, d} \geq (a + 1)^d$. Let $x = M/d \geq 1$, we claim that $a + 1 \geq \sqrt{1 + x}$. To see that, if $1 \leq x < 2$, we have $a + 1 = 2 \geq \sqrt{1 + x}$. If $x \geq 2$, then $a + 1 > x  \geq \sqrt{1 + x}$. Therefore
        \[
            \log R_{M,d} \geq d \log (a + 1) \geq \frac{d}{2}\log (1 + x) = \frac{d}{2}\log \left(1 + \frac{M}{d}\right).
        \]
        This also implies $\log R_{M, d} = \Omega\left(d \log \left(1 + \frac{M}{d}\right)\right)$ holds true.
    \end{itemize}

    Therefore, we claim that $\Pdim(\cL) = \Omega\left(pd \log\left(1 + \frac{M_f}{d}\right)\right)$.
\end{proof}

\section{Additional Results and Proofs for Section~\ref{sec:validation-loss}} \label{apx:validation-loss}

In this section, we will formally present the proof for Theorem~\ref{thm:validation-loss-upper-bound}. Again, we then provide a discussion on the improvement of our proposed result compared to prior work by Le et al. \cite{le2026provably}.

\begin{customthm}{\ref{thm:validation-loss-upper-bound} (restated)}
    Under Assumption 1, let $\mathcal{L}_{\text{val}} = \{\ell_{\alpha}^{\text{val}} : \mathcal{X} \to [-H,H] \mid \alpha \in \mathcal{A}\}$, $\Delta_{f,g} = \max\{1, \Delta_f, \Delta_g\}$, and $M_{\textup{tot}} = M_f + M_g + T_f + d.$
    \[
        \operatorname{Pdim}(\mathcal{L}_{\text{val}}) = \mathcal{O}\left(p \log(T_f T_g) + p d^2 \log\left(2 + M_{\textup{tot}}\Delta_{f,g}\right)\right).
    \]
\end{customthm}
\proof
    Fixed a problem instance $x \in \cX$ and a real-valued threshold $\tau\in \bbR$, our goal is to express $\bbI(\ell_{\alpha}^\textup{val}(x) \geq \tau)$ as a boolean combination of of polynomial FOL to which Theorem~\ref{thm:general-pdim-upper-bound-via-block-eliminiation} applies. Recall that
    \[
        \cS(x, \alpha) = \arg \min_{\theta \in \Theta} f_x(\alpha, \theta), \textup{ and }\ell_\alpha^\textup{val} = \inf_{\theta \in \cS(x, \alpha)} g_x(\alpha, \theta).
    \]
    We will proceed with the following steps.
 
    \noindent \textbf{Step 1: Piecewise polynomial representation.} For the given fixed problem instances, let $\Sigma^f_{x} \subset \{-1, 0, 1\}^{M_f}$, and $\Sigma^g_{x} \subset \{-1, 0, 1\}^{M_g}$ be the sets of of sign conditions indexing the training and validation pieces, respectively. By Assumption~\ref{asmp:structural}, we have $|\Sigma_x^f| \leq T_f$ and $|\Sigma^g_x| \leq T_g$. For every $\sigma \in \Sigma_x^f$, let $F_{x, \sigma}$ be the corresponding training piece polynomial. For every $\gamma \in \Sigma^g_x$, let $G_{x, \gamma}$ be the corresponding validation piece polynomial. We then define the region predicates as
    \[
        \textup{Where}^f_{x, \sigma}(\alpha, \theta) = \bigwedge_{m = 1}^{M_f}[\sign (h'_{x, m}(\alpha, \theta) = \sigma_m],
    \]
    and
    \[
        \textup{Where}_{x, \gamma}^g(\alpha, \theta) = \bigwedge_{m = 1}^{M_g}[\sign(h^g_{x, m}(\alpha, \theta) = \gamma_m].
    \]
    Thus, whenever $\textup{Where}^f_{x, \sigma}(\alpha, \theta)$ holds, we have
    \[
        f_x(\alpha, \theta) = F_{x, \sigma}(\alpha, \theta), 
    \]
    and similarly, whenever $\textup{Where}^f_{x, \sigma}(\alpha, \theta)$ holds, we have
    \[  
        g_x(\alpha, \theta) = G_{x, \gamma}(\alpha, \theta).
    \]
    Moreover, we also define the parameter-feasibility checking condition as
    \[
        \textup{In}_\Theta(u) = \bigwedge_{r = 1}^d[u_r - \theta_{\min} \geq 0 \land \theta_{\max} - u_r \geq 0],
    \]
    which is true exactly when $u \in \Theta$.

    \noindent \textbf{Step 2: Certifying that a candidate is a training minimizer.} 
    Fixed $\sigma \in \Sigma^f_x$. For a candidate solution $\theta$ and a competing solution $u$, we define
    \[
        \textup{Compare}_{x, \sigma}(\alpha, \theta, u) \triangleq \neg \textup{In}_\Theta(u) \lor \bigwedge_{\rho \in \Sigma^f_x}\left[ \neg \textup{Where}^f_{x, \rho}(\alpha, u) \lor  F_{x, \sigma}(\alpha, \theta) \leq F_{x, \rho}(\alpha, u) \right].
    \]
    Let's elaborate the predicate above closely. First, suppose that $u \in \Theta$. Note that there is exactly one sign conditions $\rho \in \Sigma_x^f$ is active at  the point $(\alpha, u) \in \cA \times \Theta$. For that active $\rho$, the predicate requires
    \[
        F_{x, \sigma}(\alpha, \theta) \leq F_{x, \rho}(\alpha, u) = f_x(\alpha, u).
    \]
    For all other inactive $\rho$, the corresponding implication is automatically true. Therefore, provided in that $\theta$ belong to the training corresponding to $\sigma$, we have $(\forall u \in \bbR^d) \textup{Compare}_{x, \sigma}(\alpha, \theta, u)$ is equivalent to 
    \[
        f_x(\alpha, \theta) \leq f_x(\alpha, u) \textup{ for all } u \in \Theta.
    \]
    Therefore
    \[  
        \theta \in \cS(x, \alpha) \Leftrightarrow \textup{In}_\Theta(\theta) \land \textup{Where}_{x, \sigma}^f (\alpha, \theta) \land (\forall u\in \bbR^d) \textup{Compare}_{x, \sigma}(\alpha, \theta, u),
    \]
    for the active training sign condition $\sigma$.
 
    \noindent \textbf{Step 3: Encoding a bad training minimizer.} For each pair $(\sigma, \gamma) \in \Sigma_x^f \times \Sigma^g_x$, we define $\Phi_{x, \tau, \sigma, \gamma}(\alpha)$ as 
    \begin{equation*}
        \begin{aligned}
           (\exists \theta \in \bbR^d) \left[\textup{In}_\Theta(\theta) \land \textup{Where}^f_{x, \sigma}(\alpha, \theta) \land \textup{Where}_{x, \gamma}^g(\alpha, \theta) \land G_{x, \gamma}(\alpha, \theta) < \tau \land (\forall u \in \bbR^d) \textup{Compare}_{x, \sigma}(\alpha, \theta, u)\right].
        \end{aligned}
    \end{equation*}
    Because the candidate conditions do not depend on $u$, the formula above can be equivalently be written in the form with the two ordered blocks $(\exists \theta \in \bbR^d) (\forall u \in \bbR^d)$. This implies that every branch has $K = 2$ and $d_1 = d_2 = d$.

    \noindent \textbf{Step 4. Recovering the validation-loss threshold event.} Since $\cS(x, \alpha) \neq \emptyset$, we have $\inf_{\theta \in \cS(x, \alpha)} g_x(\alpha, \theta) < \tau$ if and only if there exists $\theta \in \cS(x, \alpha)$ such that $g_x(\alpha, \theta) < \tau$. Importantly, this does not require the infimum to be attained. If the infimum is strictly below $\tau$, the definition of the infimum guarantees the existence of an element with value below $\tau$. It follows that
    \[
        \ell_\alpha^\textup{val}(x) < \tau \Leftrightarrow \bigvee_{\sigma \in \Sigma^f_g, \gamma \in \Sigma_x^g} \Phi_{x, \tau, \sigma, \gamma}(\alpha).
    \]
    This implies that the condition $\bbI(\ell_\alpha^\textup{eval}(x) \geq \tau)$ is a NOR-type Boolean combination of at most $L \leq T_fT_g$ polynomial branches, as required by Theorem~\ref{thm:general-pdim-upper-bound-via-block-eliminiation}.

    \noindent \textbf{Step 5. Counting the atomic polynomials and applying Theorem~\ref{thm:general-pdim-upper-bound-via-block-eliminiation}}. Finally, note that each branch contains at most $M_\textup{tot} = 2M_f + M_g + T_f + 4d + 1$ distinct atomic polynomial predicate, and each atomic predicate has the degree at most $\Delta_{f, g} = \max \{1, \Delta_f, \Delta_g\}$. Substituting into Theorem~\ref{thm:general-pdim-upper-bound-via-block-eliminiation} we have the final conclusion. 
\qed

\begin{remark}[Comparison to prior work]
    Compared to \cite{le2026provably}, Theorem 6.1, which obtained the upper bound of
    \[
        \cO(pd^2 \log (M_f + T_F + M_g + T_g + d) + p^2d^2 \log \Delta_{f, g}),
    \]
    using standard quantifier elimination technique, our Theorem~\ref{thm:validation-loss-upper-bound} improves both algebraic and combinatorial dependencies. In particular, our technique using \textit{nested block elimination} (Theorem~\ref{thm:general-pdim-upper-bound-via-block-eliminiation}) reduces the degree-dependent term from $p^2 d^2 \log \Delta_{f, g}$ to $pd^2 \log \Delta_{f, g}$. Moreover, by treating the $T_fT_g$ possible pairs of active training and validation pieces as separate logical branches, the dependence on $T_g$ decreases from $pd^2 \log T_g$ to just $p\log T_g$. The training-piece parameter $T_f$ remains in $M_{tot}$ because certifying lower-level optimality requires comparison against every possible training piece. 
\end{remark}

\section{Additional results and Omitted proofs for Section~\ref{sec:applications}} \label{apx:applications}

\subsection{Omitted proof for Section~\ref{sec:applications}} \label{apx:omitted-proof-applications}
    In this section, we present the detailed proof for Theorem~\ref{thm:data-driven-weighted-group-lasso}. 

    \begin{customthm}{\ref{thm:data-driven-weighted-group-lasso} (restated)}
        Consider the class of validation loss functions $\mathcal{L}_{\text{val}} = \{\ell_{\alpha}^{\text{val}} : \mathcal{X} \to [-H,H] \mid \alpha \in \mathcal{A}\}$ induced by the Weighted Group Lasso objectives. Assuming that $\Theta = \bbR^d$ and $\cA \subset (0, \infty)^p$, then the pseudo-dimension is bounded by $\operatorname{Pdim}(\mathcal{L}_{\text{val}}) = \mathcal{O}(p(d+p)^2 \log p)$.
    \end{customthm}
    \proof
        Assume under standard unconstrained formulation, that is $\Theta = \bbR^d$ and $\cA \subset (0, \infty)^p$. For $v = (v_1, \dots, v_p) \in \bbR^d$ and $\nu \in \bbR^p$, we define
        \[
            \textup{Norm}(v, \nu) = \sum_{i = 1}^p \left[ \nu_i \geq 0 \land \nu_i^2 = \sum_{j = 1}^{d_i} v_{i, j}^2 \right].
        \]
        In other words, the term $\textup{Norm}(u, \nu)$ holds exactly when $\nu_i = \|v_i\|^2$ for every $i$. We then define the polynomial lifting of the training objective as
        \[
            \tilde{f}_x(\alpha, v, \nu) = \|A\nu - b\|_2^2 + \sum_{i = 1}^p \alpha_i \nu_i.
        \]
        Given a problem instance $x$ and a real-valued threshold $\tau \in \bbR$, consider the polynomial FOL formula
        \[
            \Phi_{x, \tau}(\alpha) \triangleq (\forall \theta \in \bbR^d) (\exists (z, \nu^\theta, \nu^z) \in \bbR^{d + 2p})\left[g_x(\theta) \geq \tau \lor \left(\textup{Norm}(\theta, \nu^\theta) \land \textup{Norm}(z, \nu^z) \land \tilde{f}_x(\alpha, z, \nu^z) < \Tilde{f}_x(\alpha, \theta, \nu^\theta)\right)\right].
        \]
        In words, this formula states that every $\theta$ either has a validation loss at least $\tau$, or admits another point $z$ which strictly smaller training loss. Therefore
        \[
            \Phi_{x, \tau}(\alpha) \Leftrightarrow \textup{every }\theta \in \cS(x, \alpha) \textup{ satisfies }g_x(\alpha) \geq \tau.
        \]
        Indeed, a training minimizer cannot satisfy the second disjunct, while every non-minimizer admits a strictly better point because $\cS(x, \alpha) \neq \emptyset$. Consequently,
        \[
            \Phi_{x, \tau}(\alpha) \Leftrightarrow \inf_{\theta \in \cS(x, \alpha)} g_x(\theta)\geq \tau,
        \]
        and therefore
        \[
            \bbI(\ell_\alpha^{\textup{val}}(x) \geq \tau) = \Phi_{x, \tau}(\alpha).
        \]
        This one has one branch and two quantified blocks, with $L = 1$, $K = 2$, $d_1 = d$, and $d_2 = d + 2p$. Substituting to Theorem~\ref{thm:general-pdim-upper-bound-via-block-eliminiation}, we have the final conclusion. 
    \qed

    \begin{remark}[Comparison with prior work]        
         Le et al. \cite{le2026provably} (Theorem 8.1) obtained the pseudo-dimension bound ($\cO(p^{3}d+p^{2}d^{2})$) for data-driven Weighted Group Lasso using standard quantifier elimination. Because the (p) groups are nonempty and (e.g., $\cO\left(\sum_{i=1}^{p}d_i=d\right)$), we necessarily have ($p\le d$), and their bound simplifies to (e.g., $\cO (p^{2}d^{2})$). In contrast, Theorem~\ref{thm:data-driven-weighted-group-lasso} gives $\cO\left(p(d+p)^{2}\log(2p)\right)= \cO\left(pd^{2}\log(2p)\right))$. Thus, our result replaces the quadratic dependence on the number of groups in the dominant term by an essentially linear dependence, improving the previous bound by a factor of order (that is, $\frac{p}{\log(2p)}$) as $p$ grows. Both analyses convert the group norms into polynomial form using auxiliary variables; the improvement therefore does not arise from a different representation of the Weighted Group Lasso objective. Rather, it follows from the nested block-elimination argument in Theorem 4.2, which avoids the inflated algebraic dependence introduced by standard quantifier elimination. For fixed $p$, both bounds retain the same ($d^{2}$) dependence, whereas the improvement becomes increasingly significant when the number of groups grows.
    \end{remark}
\subsection{Other Applications} \label{apx:other-applications}
    We note that our main contribution is the an novel \textit{general} framework to establish statistical learning guarantees for a class of data-driven algorithms design problems which admits piecewise polynomials structure, or even more general, the structure that can be converted into the form of Theorem~\ref{thm:general-pdim-upper-bound-via-block-eliminiation}. To further showcase the applicability of our general frameworks, in this section we will provide other applications. 

\subsubsection{Data-driven Cost-sensitive SVM.}

Classification errors frequently have heterogeneous costs across classes or subpopulations, motivating cost-sensitive extensions of support vector machines. Rather than fixing these costs manually, we treat the group-specific penalty vector as a data-driven algorithm configuration learned across problem instances. The hinge objective is piecewise polynomial: its polynomial representation changes whenever a training or validation example crosses its affine margin boundary. Consequently, cost-sensitive SVM provides a direct application of our connected-sign-cell framework, complementary to Weighted Group Lasso, which instead illustrates algebraic lifting.

Fix integers $n, m, p, d \geq 1$. A problem instance $x = (D^\textup{tr}_x, D^\textup{val}_x)$ contains a training set $D_x^{\text{tr}} = \{(a_{x,j}, y_{x,j}, c_{x,j})\}_{j=1}^n$ and a validation set $D_x^{\text{val}} = \{(a'_{x,k}, y'_{x,k})\}_{k=1}^m$. Here $a_{x,j}, a'_{x,k} \in \mathbb{R}^d, \quad y_{x,j}, y'_{x,k} \in \{-1, 1\},$ and $c_{x,j} \in \{1, \dots, p\}$ identifies the cost group of training example $j$. Let $\mathcal{A} = [\underline{C}, \overline{C}]^p \subset (0, \infty)^p$ be the domain of group-specific penalty parameters, and let $\Theta = [-B, B]^d$ be the model-parameter domain. For a fixed regularization coefficient $\lambda > 0$, define
\[
    f_x(\alpha, w) = \frac{\lambda}{2}\Vert{}w\Vert{}_2^2 + \frac{1}{n} \sum_{j=1}^n \alpha_{c_{x,j}} \left[ 1 - y_{x,j} a_{x,j}^\top w \right]_+,
\]
where $[r]_+ = \max\{0, r\}.$ We use the ordinary validation hinge loss
\[
    g_x(w) = \frac{1}{m} \sum_{k=1}^m \left[ 1 - y'_{x,k} {a'_{x,k}}^\top w \right]_+.
\]
The induced optimistic validation loss is $\ell_\alpha^{\text{svm}}(x) = \inf_{w \in S(x, \alpha)} g_x(w),$ where $\cS(x, \alpha) = \arg\min_{w \in \Theta} f_x(\alpha, w).$ Because $\Theta$ is compact and $f_x$ is continuous, $S(x, \alpha)$ is nonempty. Moreover, the quadratic regularizer makes $f_x(\alpha, \cdot)$ strongly convex, so the minimizer is unique. The general framework, however, does not require uniqueness.The box $\Theta$ can be chosen sufficiently large that it does not modify the ordinary unconstrained SVM solution. Indeed, if $w^\star$ is the unconstrained minimizer, then
\[
    \frac{\lambda}{2} \Vert{}w^\star\Vert{}_2^2 \le f_x(\alpha, w^\star) \le f_x(\alpha, 0) \le \overline{C}.
\]
Therefore $\Vert{}w^\star\Vert{}_2 \le \sqrt{\frac{2\overline{C}}{\lambda}},$ so it suffices to take $B \ge \sqrt{\frac{2\overline{C}}{\lambda}}.$

For each training example, define the affine margin polynomial $h_{x,j}^f(w) = 1 - y_{x,j} a_{x,j}^\top w, \quad j = 1, \dots, n.$ Let $T_{\text{tr}}$ be a uniform upper bound, over $x$, on the number of sign conditions of $\{ h_{x,1}^f, \dots, h_{x,n}^f \}$ realized on $\Theta$. Similarly, define $h_{x,k}^g(w) = 1 - y'_{x,k} {a'_{x,k}}^\top w, \quad k = 1, \dots, m,$ and let $T_{\text{val}}$ uniformly bound the number of realized validation sign conditions. We then have the following result, which establishes generalization guarantee for the problem of data-driven cost-sensitive SVM. 

\begin{corollary}[Data-driven cost-sensitive SVM] \label{corol:pseudo-svm}
    Let $\mathcal{L}_{\text{svm}} = \{ \ell_\alpha^{\text{svm}} : \mathcal{X} \to \mathbb{R} \mid \alpha \in \mathcal{A} \}.$ We then have $\text{Pdim}(\mathcal{L}_{\text{svm}}) = O\left( p(n + m) + p d^2 \left( n + \log(d + m + 2) \right) \right).$
\end{corollary}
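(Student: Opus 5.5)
We obtain the corollary by a direct reduction to Theorem~\ref{thm:validation-loss-upper-bound}. Once the hinge objectives are shown to satisfy Assumption~\ref{asmp:structural} with explicit complexity parameters, only an arithmetic simplification of the resulting logarithms is left.

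\textbf{Step 1: Assumption~\ref{asmp:structural} for the training objective.} For a fixed instance $x$, take the boundary family $\{h^f_{x,1},\dots,h^f_{x,n}\}$, viewed as polynomials in $(\alpha,w)$ that do not depend on $\alpha$. This gives $M_f=n$ boundaries, all affine.
\begin{itemize}
\item On the region with sign pattern $\sigma\in\{-1,0,1\}^n$, each $[h^f_{x,j}]_+$ equals $h^f_{x,j}$ if $\sigma_j=1$ and equals $0$ otherwise.
\item Hence $f_x$ coincides there with the polynomial $F_{x,\sigma}(\alpha,w)=\frac{\lambda}{2}\|w\|_2^2+\frac1n\sum_{j:\sigma_j=1}\alpha_{c_{x,j}}h^f_{x,j}(w)$.
\item This polynomial has degree $2$, since the products $\alpha_{c}w_r$ are bilinear. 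So $\Delta_f=2$.
\item Since the boundaries are independent of $\alpha$, the sign patterns realized on $\cA\times\Theta$ are exactly those realized on $\Theta$. Thus $T_f\le T_{\text{tr}}\le 3^n$.
\end{itemize}

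\textbf{Step 2: Assumption~\ref{asmp:structural} for the validation objective.} In the same way, $g_x$ is piecewise polynomial in $(\alpha,w)$ with $M_g=m$ affine boundaries $h^g_{x,k}$ and affine pieces. This gives $\Delta_g=1$ and $T_g\le T_{\text{val}}\le 3^m$. All these parameters are uniform in $x$.

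\textbf{Step 3: Nonemptiness of the argmin.} $\cS(x,\alpha)\neq\emptyset$ holds because $\Theta$ is compact and $f_x(\alpha,\cdot)$ is continuous, as noted in the setup. The hyperparameter domain $\cA=[\underline C,\overline C]^p$ is a box, as the theorem requires.

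\textbf{Step 4: Apply Theorem~\ref{thm:validation-loss-upper-bound} and simplify.} The theorem gives $\Pdim(\cL_{\text{svm}})=\cO\bigl(p\log(T_fT_g)+pd^2\log(2+M_{\text{tot}}\Delta_{f,g})\bigr)$, with $\Delta_{f,g}=2$ and $M_{\text{tot}}=n+m+T_f+d\le n+m+d+3^n$.
\begin{itemize}
\item The first term is $p\log(3^{n}3^{m})=\cO(p(n+m))$.
\item For the second, use $a+b\le ab$ for $a,b\ge2$ to get $2+2M_{\text{tot}}\le 2\cdot 3^n(n+m+d+2)$ up to constants.
\item Taking logarithms, $\log(2+2M_{\text{tot}})=\cO(n+\log(n+m+d+2))$.
\item Since $\log(n+m+d+2)\le\log(n+1)+\log(m+d+2)$ and $\log(n+1)\le n$, this is $\cO(n+\log(d+m+2))$.
\end{itemize}
Multiplying by $pd^2$ and adding the first term yields the claimed bound.

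\textbf{Expected obstacle.} The delicate point is Step 1, specifically the bookkeeping in the piecewise representation.
\begin{itemize}
\item The penalty weights $\alpha_{c_{x,j}}$ multiply the hinge terms, so the pieces are polynomials jointly in $(\alpha,w)$ of degree $2$, not $1$. This is harmless but must be recorded correctly in $\Delta_f$.
\item The piece count must be taken over $\cA\times\Theta$, which is where $\alpha$-independence of the boundaries is used.
\item $T_f$ appears inside the $d^2$-scaled logarithm through $M_{\text{tot}}$; this is precisely why the bound contains the linear-in-$n$ term $pd^2n$ rather than a logarithmic one.
\end{itemize}
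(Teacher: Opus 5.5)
Your proposal is correct and matches the paper's proof: both read off the piecewise complexities $(n, T_{\text{tr}}, 2)$ and $(m, T_{\text{val}}, 1)$ of the hinge objectives, bound $T_{\text{tr}}\le 3^n$ and $T_{\text{val}}\le 3^m$, and substitute into Theorem~\ref{thm:validation-loss-upper-bound}. The only difference is that you spell out the simplification $\log(2+2M_{\text{tot}}) = O\bigl(n+\log(d+m+2)\bigr)$, which the paper leaves implicit, and your arithmetic there is sound.
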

\begin{proof}[Proof of Corollary~\ref{corol:pseudo-svm}]
    Fix an instance $x$. The training objective has the $n$ affine boundary polynomials $h_{x,1}^f, \dots, h_{x,n}^f.$ For a realized sign condition $\sigma \in \{-1, 0, 1\}^n,$ we define $I_+(\sigma) = \{ j \in \{1, \dots, n\} : \sigma_j = 1 \}.$ On the realization of $\sigma$, the hinge terms indexed by $j \notin I_+(\sigma)$ vanish, while the remaining hinge terms equal their affine arguments. Therefore, the active training piece is 
    \[
        F_{x,\sigma}(\alpha, w) = \frac{\lambda}{2}\Vert{}w\Vert{}_2^2 + \frac{1}{n} \sum_{j \in I_+(\sigma)} \alpha_{c_{x,j}} \left( 1 - y_{x,j} a_{x,j}^\top w \right).
    \]
    This is a polynomial of total degree at most two in $(\alpha, w)$. In particular, the only mixed terms have the form $\alpha_{c_{x,j}} w_r,$ which are bilinear. It follows that $f_x$ has piecewise-polynomial complexity $(M_f, T_f, \Delta_f) = (n, T_{\text{tr}}, 2)$, where $T_{\text{tr}} \leq 3^n$. Similarly, fo the validation objective, we conclude that $g_x$ has the piecewise polynomial complexity $(M_g, T_g, \Delta_g) = (m, T_{\text{val}}, 1)$, where $T_\textup{val} \leq 3^m$. Besides, we note that $\Delta_{f,g} = \max\{1, \Delta_f, \Delta_g\} = 2,$ and $M_{\text{tot}} = M_f + M_g + T_f + d = n + m + T_{\text{tr}} + d.$ Substituting to Theorem~\ref{thm:validation-loss-upper-bound}, we have the final conclusion. 
\end{proof}
 
\subsubsection{Multi-penalty ridge regression}

Multi-parameter ridge regularization uses several penalties to promote different structural properties of the fitted solution. Let 
\[
    f_x(\alpha, \theta) = \|A\theta - b\|_2^2 + \sum_{i = 1}^p \alpha_i \|D_i \theta\|_2^2,
\]
and let $g_x(\alpha, \theta) = \|A' \theta - b'\|_2^2$, where $\alpha \in \cA = [\alpha_{\min}, \alpha_{\max}]^p$ and $\theta \in \Theta = [\theta_{\min}, \theta_{\max}]^d$ are box like hyperparameters and parameters domain. Let $\cL_\textup{ridge}$ denote the corresponding optimistic validation-loss class. We then have the following result, which establishes the pseudo-dimension upper bound for the problem of tuning regularization parameters in multi-penalty ridge regression.

\begin{corollary}[Pseudo-dimension for ridge regression] \label{corollary:psedo-ridge}
    We have $\Pdim(\cL_\textup{ridge}) = \cO(pd^2 \log d)).$
\end{corollary}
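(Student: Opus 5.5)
We plan to obtain Corollary~\ref{corollary:psedo-ridge} from Theorem~\ref{thm:validation-loss-upper-bound}, the validation-loss bound. To use it, we only need to read off the piecewise-polynomial complexity of $f_x$ and $g_x$ and check that Assumption~\ref{asmp:structural} holds.

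\textbf{Structural verification.} For a fixed instance $x=(A,b,A',b',D_1,\dots,D_p)$, the objective $f_x(\alpha,\theta)=\|A\theta-b\|_2^2+\sum_i\alpha_i\|D_i\theta\|_2^2$ is a single polynomial in $(\alpha,\theta)$. Its degree is at most $3$, since it is quadratic in $\theta$ and linear in $\alpha$. We therefore take $\mathbb{H}=\emptyset$ (so $M_f=0$), $\Sigma_f$ equal to the single empty sign pattern (so $T_f=1$), and $\Delta_f=3$.

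The validation objective $g_x(\alpha,\theta)=\|A'\theta-b'\|_2^2$ is handled the same way and gives $(M_g,T_g,\Delta_g)=(0,1,2)$. If one insists on $M_f\ge 1$, a dummy constant boundary polynomial such as $h\equiv 1$ works and changes only constants.

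Nonemptiness of $\mathcal{S}(x,\alpha)$ follows because $f_x(\alpha,\cdot)$ is continuous on the compact box $\Theta$. Hence Assumption~\ref{asmp:structural} holds uniformly in $x$.

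\textbf{Substitution.} With these values, $\Delta_{f,g}=\max\{1,3,2\}=3$ and $M_{\textup{tot}}=M_f+M_g+T_f+d=d+1$ (or $d+O(1)$ with dummy boundaries). Theorem~\ref{thm:validation-loss-upper-bound} then gives
\[
\Pdim(\cL_{\textup{ridge}})=\cO\bigl(p\log(1\cdot 1)+pd^2\log(2+3(d+1))\bigr)=\cO(pd^2\log d),
\]
using $\log(3d+5)=\cO(\log d)$ for $d\ge 2$; the case $d=1$ is absorbed into the constant.

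\textbf{Expected obstacle.} There is no real obstacle. The only points needing care are bookkeeping ones:
\begin{itemize}
\item the degree count is $3$, not $2$, because of the bilinear-in-$\alpha$ terms $\alpha_i\theta_j\theta_k$;
\item degenerate values such as $M_f=0$ or $d=1$ must not break the logarithms;
\item the range bound $[-H,H]$ is needed for the class statement, and it holds because $\mathcal{A}$ and $\Theta$ are compact and $g_x$ is continuous for fixed bounded instances.
\end{itemize}
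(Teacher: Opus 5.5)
Your proposal is correct and is the same as the paper's proof: read off $(M_f,T_f,\Delta_f)=(0,1,3)$ and $(M_g,T_g,\Delta_g)=(0,1,2)$ and substitute into Theorem~\ref{thm:validation-loss-upper-bound}; your checks of nonemptiness of $\mathcal{S}(x,\alpha)$ and of $\log(3d+5)=\mathcal{O}(\log d)$ fill in details the paper leaves implicit. One small nit: at $d=1$ the expression $pd^2\log d$ is zero, so that case is not ``absorbed into the constant'' unless you read $\log d$ as $\log(2d)$ or $\log(d+1)$, which is the convention the stated bound implicitly needs anyway.
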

\begin{proof}[Proof of Corollary~\ref{corollary:psedo-ridge}]
    It is clearly that the complexity of the training and validation objectives are $(M_f, T_f, \delta_f) = (0, 1, 3)$, and $(M_g, T_g, \Delta_g) = (0, 1, 2)$, respectively. Therefore applying Theorem~\ref{thm:validation-loss-upper-bound} gives us the final conclusion. 
\end{proof}


\end{document}